\documentclass{article}

\usepackage{microtype}
\usepackage{graphicx}
\usepackage{subcaption}
\usepackage{booktabs} 

\usepackage{hyperref}
\newcommand{\stitle}[1]{\vspace{1ex} \noindent{\bf #1}}
\long\def\comment#1{}

\usepackage[algo2e,linesnumbered,ruled,vlined]{algorithm2e}
\SetKwProg{Proc}{Procedure}{}{}

\usepackage[accepted]{icml2026}

\usepackage{tikz-cd}
\usepackage{tikz}
\usepackage{xcolor}
\usepackage{multirow}
\usepackage{xspace}

\usepackage{amsmath}
\usepackage{amssymb}
\usepackage{mathtools}
\usepackage{amsthm}

\usepackage[capitalize,noabbrev]{cleveref}

\theoremstyle{plain}
\newtheorem{theorem}{Theorem}[section]
 
\newtheorem{lemma}[theorem]{Lemma}

\theoremstyle{definition}
\newtheorem{definition}[theorem]{Definition}

\newtheorem{example}[theorem]{Example}
\theoremstyle{remark}

\newcommand{\ignore}[1]{}
\newcommand{\nop}[1]{}
\newcommand{\eat}[1]{}
\usepackage[textsize=tiny]{todonotes}

\icmltitlerunning{Scalable Topology-Preserving Graph Coarsening: Concepts and
Algorithms}

\begin{document}

\twocolumn[
  \icmltitle{Scalable Topology-Preserving Graph Coarsening: Concepts and
Algorithms}




  \begin{icmlauthorlist}
    \icmlauthor{Xiang Wu}{yyy}
    \icmlauthor{Rong-Hua Li}{yyy}
    \icmlauthor{Xunkai Li}{yyy}
    \icmlauthor{Kangfei Zhao}{yyy}
    \icmlauthor{Hongchao Qin}{yyy}
    \icmlauthor{Guoren Wang}{yyy}
  \end{icmlauthorlist}

  \icmlaffiliation{yyy}{Department of Computer Science, Beijing Institute of Technology}

  \icmlcorrespondingauthor{Rong-Hua Li}{lironghuabit@126.com}

  \icmlkeywords{Machine Learning, ICML}

  \vskip 0.3in
]



\printAffiliationsAndNotice{}  

\begin{abstract}
Graph coarsening reduces the size of a graph while  preserving  certain properties. Most existing methods preserve either spectral or spatial characteristics. Recent research shows that topology-preserving coarsening methods maintain GNN performance on coarsened graphs but suffer from exponential time complexity. To address these problems,  we propose  Scalable Topology-Preserving Graph Coarsening  (STPGC) by introducing the concepts of graph strong collapse and graph edge collapse extended from algebraic topology. STPGC comprises  three new algorithms, \textit{GStrongCollapse}, \textit{GEdgeCollapse}, and \textit{NeighborhoodConing} based on these two concepts, which eliminate dominated nodes and edges  while rigorously preserving topological
 features.  We further prove that STPGC preserves the GNN receptive field and develop approximate algorithms to accelerate GNN training. Experiments on node classification with GNNs demonstrate the efficiency and effectiveness of STPGC. 
\end{abstract}

\section{Introduction}

Graph-structured data is ubiquitous in real-world scenarios, spanning applications such as social networks, recommender systems, and molecular graphs. As data volumes surge, analyzing large-scale graphs poses significant computational challenges.
 One prominent class of techniques to tackle this problem focuses on reducing graph size, including graph coarsening \cite{kumar2023featured,DBLP:conf/nips/HanF024}, graph sparsification \cite{chen2021unified,hui2023rethinking}, and graph condensation \cite{jin2021graph,fang2024exgc}. Among these, graph coarsening has attracted considerable attention due to its solid  theoretical foundations and practical effectiveness. It generates a downsized graph by merging specific nodes while preserving certain graph properties.  Its most prevalent application is scaling up GNNs by training them on the coarsened graphs while maintaining  performance on the original graphs \cite{huang2021scaling}.
 

 Most  graph coarsening methods focus on preserving spatial or spectral features. Spatial approaches retain structural patterns like influence diffusion or local connectivity \cite{DBLP:conf/kdd/PurohitPKZS14}, while spectral methods target the graph Laplacian's eigenvalues or eigenvectors \cite{loukas2018spectrally}. These features are typically maintained by optimizing metrics such as reconstruction error \cite{DBLP:conf/sdm/LeFevreT10} or relative eigenvalue error \cite{loukas2019graph}. Additionally, greedy pairwise contraction of nodes has been shown to preserve spectral properties \cite{chen2011algebraic}.


Despite the effectiveness of these methods, none of them preserves the \textit{topological} features of graphs. Here, the topological features  stem from the theory of algebraic topology \cite{eilenberg2015foundations}, characterizing the invariants of data under continuous deformation, such as stretching and compressing \cite{dey2022computational}. Specifically, graph topological features encompass connectivity, rings, and higher-order voids. These features capture the essential information of graphs and have been shown to benefit downstream applications \cite{DBLP:conf/icml/YanMGT022,DBLP:conf/log/yan:cycle,DBLP:conf/iclr/Zhuring,immonen2023going,meng:encode}. For example, ring information has improved the performance of graph representation learning \cite{horntopological,hiraoka2024topological,pmlr-v259-nuwagira25a}, especially in biology and chemistry \cite{townsend2020representation,aggarwal2023tight}.    Additionally, leveraging local topological structures contributes to improved node classification  \cite{chen2021topological,horntopological}, and link prediction \cite{yan2021link,yan2022neural}.    Furthermore,  recent research has shown that preserving topological features in graph coarsening benefits node classification with GNNs, while disrupting topological structures degrades performance \cite{meng2024topology}.

While the importance of topological structures is well-established, the question of how to effectively preserve them in graph coarsening remains largely unexplored.  The only existing method, Graph Elementary Collapse (GEC) \cite{meng2024topology}, relies on clique enumeration to identify reducible structures. This incurs an exponential worst-case time complexity of $O(3^{n/3})$, rendering it unscalable for large graphs. To bypass this, GEC first partitions graphs into subgraphs for separate processing, applies elementary collapse to each subgraph independently, and finally reconstructs the coarsened graph by stitching these processed components back together. However, this workaround poses two major limitations: (i) the partitioning process inevitably disrupts global topological features, which cannot be recovered during reconstruction; and (ii) the exponential complexity persists within  subgraphs, while the additional reconstruction overhead hinders its utility on large-scale graphs.

 To address these problems, we draw inspiration from  strong collapse and edge collapse  in algebraic topology \cite{boissonnat2019computing,boissonnat2020edge,barmak2012strong}. By extending  these  concepts to graphs,  we introduce  scalable topology-preserving
 graph coarsening (STPGC), consisting of three graph coarsening algorithms rigorously preserving the topological features.  Specifically, we first propose \textit{GStrongCollapse} and \textit{GEdgeCollapse}, which iteratively identify dominated nodes and edges and reduce them, respectively. To enable further coarsening ability, we propose  the \textit{neighborhood coning} algorithm to insert dominated edges between the neighbors of nodes to create new dominated nodes and further reduce them with graph strong collapse.  These algorithms are more efficient than GEC because  they eschew costly clique enumeration by directly identifying reducible nodes and edges through neighborhood inclusion.  We demonstrate the effectiveness of STPGC for maintaining GNN performance from the perspective of preserving the GNN receptive field. To maximize scalability for GNN training, we extend the framework with \textit{ApproximateCoarsening}  that relaxes dominance conditions for flexible coarsening ratios.  This strategy effectively balances topological preservation with the algorithmic efficiency. 


To summarize, our contributions are:
(1) \textbf{Novel concepts and algorithms}: We introduce the concepts of  graph strong collapse and graph edge collapse, extended  from   algebraic topology to graph analysis. Building on the concepts, we propose three scalable, topology-preserving graph coarsening algorithms. 
(2) \textbf{Important application}:  We apply STPGC to accelerate GNN training. We  prove that STPGC preserves the GNN receptive field on the coarsened graph, and propose \textit{ApproximateCoarsening} for scalable GNN training.
(3) \textbf{SOTA performance}: Extensive experiments demonstrate  that STPGC outperforms state-of-the-art (SOTA) approaches in node classification while delivering up to a 37x runtime improvement over GEC.

\section{Preliminaries}
\subsection{Notations and Concepts} 
\textbf{Graph Coarsening.} We denote a graph as $\mathcal{G}= (\mathcal{V},\mathcal{E})$, where $\mathcal{V}$  and $\mathcal{E}$ are the sets of $n$ nodes and $m$ edges. If the nodes have $d$-dimensional features and labels,  $\mathbf{X} \in \mathbb{R}^{ n \times d}$ represents  the feature matrix of the nodes,  and $\mathbf{Y} \in \mathbb{N}^{ n}$ represents the labels of the nodes. 
Graph coarsening aims to derive a graph $\mathcal{G}^c = (\mathcal{V}^c,\mathcal{E}^c)$ and the corresponding $\mathbf{X}^c$ and $\mathbf{Y}^c$, where $|\mathcal{V}^c|<<|\mathcal{V}|$, while preserving the key characteristics of $\mathcal{G}$.


\noindent

\noindent

\textbf{Simplex and Simplicial Complex.} A $k$-simplex $\tau$ is strictly defined as the convex hull of $k+1$ affinely independent points \cite{dey2022computational}. In the graph domain, this corresponds to a clique of $k+1$ nodes: $0$-, $1$-, and $2$-simplices represent nodes, edges, and triangles, respectively. A collection of simplices forms an abstract simplicial complex $\mathcal{K}$ if it satisfies the hereditary property: for any $\tau \in \mathcal{K}$, all subsets (faces) of $\tau$ are also included in $\mathcal{K}$ \cite{boissonnat2020edge}. Specifically, a complex constructed from the cliques of a graph is termed a \textbf{clique complex}.

\noindent
\textbf{Elementary Collapse \cite{whitehead1939simplicial}.} Given a $k$-simplex $\tau$, a simplex $\sigma \subseteq \tau$ is a face of $\tau$. A simplex is maximal if it is not contained in any other simplex. A face $\sigma$ is termed a free face if it is strictly contained in a unique maximal simplex $\tau$. Removing the pair $(\sigma, \tau)$ is called an elementary collapse. Graph Elementary Collapse (GEC) \cite{meng2024topology} extends elementary collapse to graphs by considering a $(k+1)-$clique as a $k-$simplex, transforming a graph to a clique complex. GEC enumerates all cliques and identifies inclusion relationships. However, since clique enumeration has a worst-case time complexity of $O(3^{n/3})$, it becomes impractical for large-scale graphs. To address this, we introduce two scalable operators called graph strong collapse and graph edge collapse.

\subsection{Graph Strong and Edge Collapses}
\label{2.3}
\noindent

We then introduce the concept of graph strong collapse and graph edge collapse. We observe that the strong and edge collapses originally defined for simplicial complexes \cite{boissonnat2019computing,boissonnat2020edge} are directly applicable to nodes and edges (i.e., low-dimensional simplices). Motivated by this, we extend these  to  the graph domain. We first define the relevant neighborhood concepts and then formally define graph strong collapse and edge collapse.



 The \textit{open neighborhood} of a node $u \in \mathcal{V}$ is defined as $N(u)=\{v\in\mathcal{V}| (u,v) \in \mathcal{E}\}$. We denote $deg(u) = |N(u)|$ as the degree of $u$. The \textit{closed neighborhood} of $u$ is defined as $N[u] = \{u\} \cup N(u)$.  The \textit{open and closed neighborhood of an edge} $(x,y) \in \mathcal{E}$ is  defined  as  $N(x,y) = N(x) \cap N(y)$ and $N[x,y] = N[x] \cap N[y]$, respectively.

\noindent
\begin{definition}(Graph strong collapse)
   On a graph $\mathcal{G}$, $u$ is said to be dominated by a node $v$ if and only if $N[u] \subseteq N[v]$, and $u$ is called a \textbf{dominated} node if it is dominated by another node.  Removing a dominated node  and its connected edges from $\mathcal{G}$ is called a \textbf{graph strong collapse}.
   \label{def:strong}
\end{definition}

\begin{example}
      In  Figure \ref{fig:example}(a), the red leaf nodes are dominated by their parent nodes. We can remove the red nodes with graph strong collapse and their parent nodes then become dominated, which can be further removed via graph strong collapse.   
\end{example}

 \begin{definition}
     (Graph edge collapse).  An edge $(x,y)$ is said to be dominated by a node $v$ if and only if $N[x,y] \subseteq N[v] $, where $v \notin \{x,y\}$. $(x,y)$ is called a \textbf{dominated} edge if it is dominated by another node. Removing a dominated edge from $\mathcal{G}$ is called a  \textbf{graph edge collapse}.  
     \label{def:edge}
 \end{definition}

\noindent
\begin{example}
     An example of dominated edges is illustrated in Figure~\ref{fig:example}(b), where all red edges are dominated. The three outer red edges are dominated by the central node, while each of the three inner red edges is dominated by one of the outer nodes. Reducing a dominated edge is a graph edge collapse. Once the three outer edges are eliminated, the inner edges are no longer dominated. 
\end{example}

\begin{figure}[t]
    \centering
    \includegraphics[width=\linewidth]{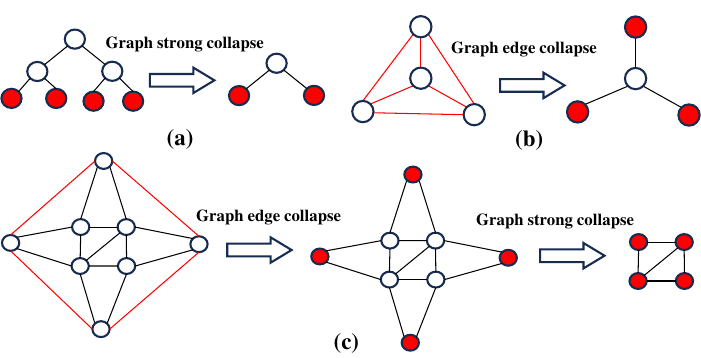} 
    \caption{Examples of graph strong collapse and graph edge collapse. The dominated nodes and edges are shown in red.  } 
    \label{fig:example} 
\end{figure}

\noindent

\textbf{Homotopy.} A critical property of graph strong and edge collapses is that they preserve the \textit{homotopy equivalence} between the original and the reduced graph. Two graphs are defined as homotopy equivalent if their underlying clique complexes can be continuously deformed into one another \cite{dey2022computational}. Crucially, preserving this equivalence ensures that the reduced graph retains the same topological features as the original, including connected components, rings, voids, and Betti numbers, thereby capturing essential local and global connectivity structures \cite{meng2024topology}. This property is formally established in the following Lemma (proofs are deferred to the Appendix \ref{app:proofs}).

\begin{lemma}
   (Homotopy Equivalent) Let $\mathcal{G}^c$ be a subgraph derived from $\mathcal{G}$ through graph strong collapses and graph edge collapses, then $\mathcal{G}^c$ and $\mathcal{G}$ are homotopy equivalent.
   \label{lemma1}
\end{lemma}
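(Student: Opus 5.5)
We argue by induction on the number of collapse operations. Homotopy equivalence is transitive, so it suffices to show that one graph strong collapse or one graph edge collapse takes the clique complex $K(\mathcal{G})$ to a homotopy-equivalent complex. Since $\mathcal{G}^c$ is obtained by a finite sequence of such operations, the lemma follows. Throughout, the clique complex of $\mathcal{G}$ minus a node $u$ is the full subcomplex $K(\mathcal{G})\setminus \mathrm{St}(u)$. Removing an edge $(x,y)$ gives the clique complex whose simplices are exactly those of $K(\mathcal{G})$ not containing both $x$ and $y$.

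\textbf{Strong collapse step.} Suppose $N[u]\subseteq N[v]$ with $u\neq v$. Then every clique $\sigma$ containing $u$ lies in $N[u]\subseteq N[v]$, so $\sigma\cup\{v\}$ is also a clique. Hence the closed star of $u$ is contained in the cone $v * \mathrm{lk}(u)$. The link $\mathrm{lk}(u)$ in $K(\mathcal{G})$ is therefore a cone with apex $v$, and so it is contractible. Writing $K(\mathcal{G}) = K(\mathcal{G}-u)\cup \overline{\mathrm{St}}(u)$, the star $\overline{\mathrm{St}}(u)=u*\mathrm{lk}(u)$ is attached along $\mathrm{lk}(u)$. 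Gluing a cone onto a contractible subcomplex does not change the homotopy type; this is the gluing lemma, or equivalently a sequence of elementary collapses removing the pairs $(\tau,\tau\cup\{u\})$ for $v\notin\tau$. Hence $K(\mathcal{G}-u)\simeq K(\mathcal{G})$. This is the flag case of the strong collapse of Barmak--Minian and Boissonnat et al.

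\textbf{Edge collapse step.} Let $e=(x,y)$ be dominated by $v\notin\{x,y\}$, so $N[x,y]\subseteq N[v]$. The simplices removed are those containing $e$, namely the closed star of $e$ minus its boundary part. The link of $e$ is $\mathrm{lk}(e)=K(N(x,y))$, the clique complex on the common neighbours. Any clique $\sigma\subseteq N(x,y)$ extends to $\sigma\cup\{v\}$, because $\sigma\subseteq N[v]$ and $v\in N(x,y)$. The last membership holds since $x,y\in N[x,y]\subseteq N[v]$ and $v\neq x,y$. So $\mathrm{lk}(e)$ is a cone with apex $v$ and is contractible. We then pair each simplex $\tau\supseteq e$ with $v\notin\tau$ against $\tau\cup\{v\}$, ordered by decreasing dimension. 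Each such pair is an elementary collapse in the current complex, and together the pairs remove exactly the simplices containing $e$. The result is the clique complex of $\mathcal{G}-e$. This is the argument of \cite{boissonnat2020edge}, specialized to flag complexes.

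The step needing the most care is the edge case. There we must check two things. First, the collapses can be scheduled so that each $\tau$ is a free face at the moment it is removed, which the top-down ordering handles: $\tau\cup\{v\}$ is then the unique proper coface of $\tau$ still present. Second, the resulting complex is again a clique complex, namely that of the reduced graph. This holds because the surviving simplices are exactly the cliques of $\mathcal{G}-e$, so flagness is preserved. Combining the two steps with induction completes the proof.
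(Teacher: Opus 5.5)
Your proof is correct and follows the same route as the paper's: pass to clique complexes and use that strong and edge collapses preserve homotopy type. You prove these facts directly (the link of $u$, resp.\ of $e$, is a cone with apex $v$, plus a top-down schedule of elementary collapses), and you explicitly check that the collapsed complex is again the clique complex of the reduced graph, which the paper leaves implicit when it only says $\mathcal{G}^c$ is the 1-skeleton of $\mathcal{K}^c$.

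One slip in the strong-collapse parenthetical: the elementary collapses should pair $\sigma$ with $\sigma\cup\{v\}$ for $u\in\sigma$, $v\notin\sigma$, exactly as in your edge step, not $(\tau,\tau\cup\{u\})$, which would delete simplices of $K(\mathcal{G}-u)$. Similarly, the closed star of $u$ lies in the closed star of $v$, not in $v*\mathrm{lk}(u)$; your next conclusion, that $\mathrm{lk}(u)$ is a cone with apex $v$, is still correct.
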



\section{Graph Coarsening Algorithms}

In this section, we present three topology-preserving graph coarsening algorithms: \textit{GStrongCollapse}, \textit{GEdgeCollapse}, and \textit{NeighborhoodConing}. These algorithms are employed  collaboratively for the graph coarsening task. We then introduce their application in accelerating GNN training.
\subsection{Graph Strong Collapse}

\textit{GStrongCollapse} progressively coarsens the graph by iteratively merging dominated nodes $u$ into their dominators $v$ to form supernodes. Algorithm \ref{alg:strong} implements this efficiently using a queue to track candidates. For each node $u$, we scan its neighbors (lines 7–10); if a dominator $v$ is identified, $u$ is merged into $v$, and the search terminates early. Crucially, as removing $u$ may alter local dominance relations, its neighbors $N(u)$ are re-queued for reevaluation (line 12).

To improve efficiency, we employ several strategies. First, we expedite checks using the necessary condition $deg(v) \geq deg(u)$ (line 8) and neighbor hash tables for fast lookup. Second, to avoid the $O(|N(u)|+|N(v)|)$ cost of sparse matrix modification, we use lazy deletion, marking nodes as removed in $O(1)$ time. Finally, we skip nodes with degrees exceeding a threshold $\theta_1$, as high-degree nodes are rarely dominated. It is straightforward  that Algorithm \ref{alg:strong}  faithfully implements the process defined in Definition \ref{def:strong}. The time and space complexity of Algorithm \ref{alg:strong} are $O(m  \theta_1)$ and $O(|\mathcal{V}| + |\mathcal{E}|)$, respectively. Due to space constraints, the detailed complexity analysis is deferred to Appendix \ref{prop:complexity}.

\begin{algorithm2e}[t]
\scriptsize
		\caption{GStrongCollapse}
		\label{alg:strong}
		\KwIn{$\mathcal{G}= (\mathcal{V},\mathcal{E})$.}
		\KwOut{The coarsened graph $\mathcal{G}^c = \{\mathcal{V}^c,\mathcal{E}^c\}$.}
              \textit{NodeQueue} $\gets$ \textit{push} all the nodes in $\mathcal{V}$;\\
    $\mathcal{G}^c \gets \mathcal{G}$;\\
 \While{NodeQueue and the coarsening ratio is not achieved}{
    $u\gets$ \textit{pop(NodeQueue);}\\
    $N[u] \gets $ the closed neighborhood of $u$; \\
   \If{$deg(u)\leq \theta_1$}{
    \For{$v$ $\in$ $N(u)$}{
    \If{$deg(v) \geq deg(u)$}{
    $N[v]  \gets$ the closed neighborhood of $v$;\\
    \If{$N[u] \subseteq N[v] $ }{
        \textit{delete} $u$ from $\mathcal{G}^c$;\\
        \textit{Push} each node in  $N(u)$ into \textit{NodeQueue} if it's not in \textit{NodeQueue};\\
        \textbf{break};
    }
    }

   }
   }
 }
\Return $\mathcal{G}^c$

\end{algorithm2e}



\subsection{Graph Edge Collapse}
\noindent

While \textit{GStrongCollapse} is effective on reducing simple structures like trees and cliques, it struggles with complex graphs lacking initially dominated nodes. An example is shown in Figure \ref{fig:example}(c): initially, no node is dominated. However, applying \textit{GEdgeCollapse} to remove the dominated red edges breaks the deadlock, exposing new dominated nodes for subsequent reduction. Algorithm \ref{alg:edge} achieves this by iteratively deleting dominated edges. Upon deletion, adjacent edges are re-queued to reflect neighborhood updates. For efficiency, we skip edges $(x, y)$ for which $deg(x) + deg(y) > 2\theta_1$. This procedure faithfully implements Definition \ref{def:edge}, thus preserving the homotopy equivalence. The time complexity of Algorithm \ref{alg:edge} is $O(m \theta_1^2)$.

\begin{algorithm2e}[t]

\scriptsize
		\caption{GEdgeCollapse}
		\label{alg:edge}
		\KwIn{$\mathcal{G}= (\mathcal{V},\mathcal{E})$.}
		\KwOut{The coarsened graph $\mathcal{G}^c = \{\mathcal{V}^c,\mathcal{E}^c\}$.}
              \textit{EdgeQueue} $\gets$ \textit{push} all the edges in $\mathcal{E}$;\\
              $\mathcal{G}^c \gets \mathcal{G}$;\\
 \While{EdgeQueue}{
    $(x,y)\gets$ \textit{pop(EdgeQueue)};\\
    \If{$deg(x)+deg(y)\leq 2\theta_1$}{
    $N(x,y) \gets $ the open neighborhood of $x$ and $y$; \\
    
    \For{$v$ $\in$ $N(x,y)$}{
    \If{ $ N(x,y) \subseteq  N[v]$}{
        \textit{delete} $(x,y)$ from $\mathcal{G}^c$;\\   
        \textit{Push} each edge connected to $x$ or $y$ into \textit{EdgeQueue} , if it's not in \textit{EdgeQueue} ;\\
        \textbf{break};
    }

    } 
    }
   
 }
\Return $\mathcal{G}^c$

\end{algorithm2e}

\begin{figure}[t]
    \centering
    \includegraphics[width=\linewidth]{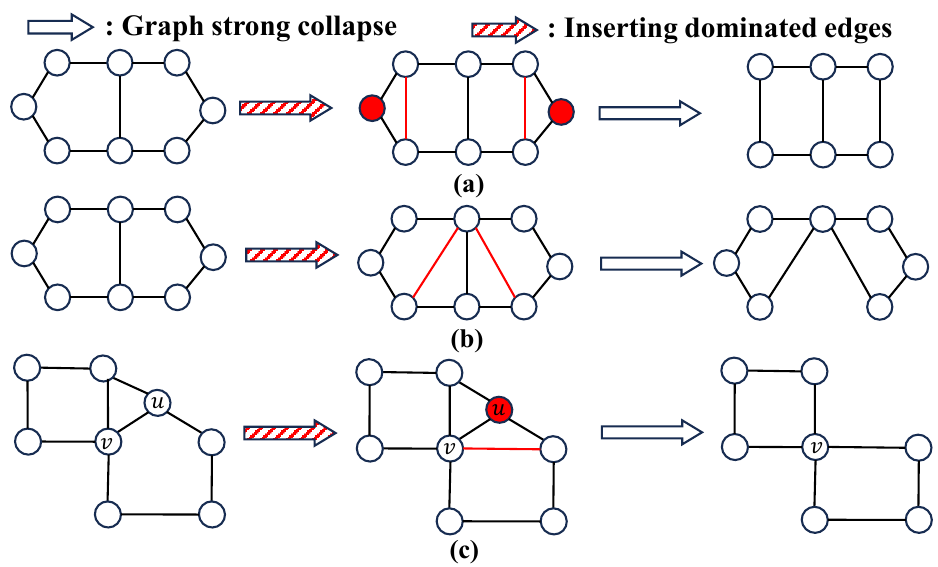} 
    \caption{
Examples of neighborhood coning. The inserted edges and newly created dominated nodes are shown in red.} 
    \label{fig:example2} 
\end{figure}

\noindent
 \subsection{Neighborhood Coning} 
Although graph strong collapse and edge collapse effectively eliminate dominated nodes and edges, certain scenarios remain where they cannot be applied.   For example, the left graph  in Figure \ref{fig:example2}(a) contains neither dominated nodes nor edges, yet it can be further reduced  while preserving its topological features.  To explain this process, we can decompose it into two steps: first inserting dominated edges (the red edges) into the graph, and then creating new dominated nodes (the red nodes). Such a process is an \textit{inverse} of a graph edge collapse. Here, the inverse means that  after adding a dominated edge to the graph, we can obtain the original graph through a graph edge collapse, thereby guaranteeing homotopy equivalence.  The red nodes can then be reduced through graph strong collapse.  This approach is not restricted to creating 2-degree dominated nodes, nor is it limited to inserting just one edge (Figure \ref{fig:example2}(b), (c)). This insight naturally leads to a generalized strategy: we can create dominated nodes of arbitrary degree and subsequently reduce them, enabling further coarsening.



Building on the above analysis, we develop a transformation that induces dominance via targeted edge insertion. The central idea involves  inserting dominated edges into the graph to create new dominated nodes. Designing this strategy necessitates answering two  questions: where to insert these edges, and how many edges are required. First, regarding the inserting position, since dominance is predicated on neighborhood inclusion ($N[u] \subseteq N[v]$), the inserted edges must connect the candidate dominator $v$ to $u$'s neighbors. Second, regarding the number of inserted edges, ensuring full inclusion for the entire set $N(u) \setminus \{v\}$ requires inserting at most $\deg(u)-1$ edges.  This operation resembles constructing a cone with apex $v$ over the closed neighborhood of $u$, which we formalize as \textit{Neighborhood Coning}.


\begin{definition}(Neighborhood Coning)
Given a node $u \in \mathcal{V}$, $u$ can be neighborhood coned by a neighbor $v \in N(u)$ if for every $w \in N(u) \setminus \{v\}$, the edge $(v,w)$ either: 1) exists in $\mathcal{G}$, or
2) can be inserted as a dominated edge. The process of adding these required dominated edges and subsequently reducing $u$  is called a neighborhood coning on $u$.
\label{Coning}
\end{definition}





\begin{algorithm2e}[t]

\scriptsize
		\caption{NeighborhoodConing}
		\label{alg:insertde}
		\KwIn{$\mathcal{G}= (\mathcal{V},\mathcal{E})$.}
		\KwOut{The coarsened graph $\mathcal{G}^c = \{\mathcal{V}^c,\mathcal{E}^c\}$ }
          \textit{NodeQueue} $\gets$ Initialize a priority queue of nodes with ascending degrees;\\
          $\mathcal{G}^c \gets \mathcal{G}$;\\
        \While{NodeQueue and the coarsening ratio is not achieved} {
        $u$ $\gets$ \textit{NodeQueue.pop()};\\
        \If{deg(u) $\leq \theta_1$}{
        \For{v $\in$ N(u)}{
            \textit{Inserted} $\gets$ True;\\
            \textit{InsertEdgeList} $\gets$ an empty list;\\
            \For{$w \in N(u) \setminus \{v\}$}{
            \If{(v,w) does not exist}{
                \If{ $(v,w)$ is  dominated} {
                Add $(v,w)$ into \textit{InsertEdgeList};\\
                }
               \Else{
                \textit{Inserted} $\gets$ False;\\
                \textbf{break};\\
            }
            }
            }
            \If{ Inserted is True} {
            \textbf{break};\\
            }
        }
        \If{ Inserted is True}{
        \textit{Insert} edges in \textit{InsertEdgeList} into $\mathcal{G}^c$ and \textit{Delete} $u$;\\
        \textit{Update} $deg(v)$  in  \textit{NodeQueue};\\
        }
        }
        }
\Return $\mathcal{G}^c$

\end{algorithm2e}

\noindent


\textbf{Neighborhood Coning Algorithm.} The reduction ability of neighborhood coning is sensitive to processing order.
 As illustrated in Figure~\ref{fig:example2} (a) and (b), different node selections lead to varying coarsening outcomes: the upper example reduces two nodes, while the lower one reduces only a single node. 
Since lower-degree nodes tend to involve more complex topological features and are easier to dominate, we prioritize them by processing nodes in ascending degree order to maximize the number of reduced nodes.
Algorithm 3 details this process. For each node $u$, we search for a neighbor $v$ capable of inducing dominance by inserting missing edges to the set $N(u) \setminus \{v\}$. Upon success, $u$ is removed. In addition, any inserted edge dominated by nodes other than $u$ remains dominated after $u$'s reduction, which can also be safely reduced. Given the degree threshold $\theta_1$ and the maximum degree $d_{max}$, the worst-case time complexity of Algorithm~\ref{alg:insertde} is $O(n\theta_1 d_{max}^2)$. On real-world sparse graphs, the average cost  of edge dominance checks can be amortized to $O(\Bar{d}^2)$ per edge, where $\Bar{d}$ denotes the average node degree. Thus,  the overall complexity is amortized to $O(n\theta_1^2 \Bar{d}^2)$, where $\Bar{d} << d_{
 \max}$.


\textbf{Exact Coarsening.}  To maximize reduction capability, we organize these three algorithms as follows. We first iteratively perform \textit{GStrongCollapse} and \textit{GEdgeCollapse} for $\delta_2$ iterations (lines 6-8). This loop can either continue until no further nodes or edges can be collapsed or terminate after the predefined number of iterations $\delta_2$. Following this, we apply \textit{NeighborhoodConing} to enable further coarsening. According to Lemma~\ref{insert}, no dominated nodes remain after neighborhood coning, thereby completing the coarsening process. The procedure is named \textit{ExactCoarsening}, as depicted in lines 5–10 of Algorithm~\ref{alg:approximate}.

\begin{lemma}
 Reducing a dominated node by neighborhood coning does not create new dominated nodes.
 \label{insert}
\end{lemma}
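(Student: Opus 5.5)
The plan is to compare three graphs. The first is the graph $\mathcal{G}$ before the operation. The second is the intermediate graph $\mathcal{H}$ obtained by inserting the dominated edges $(v,w)$ for $w\in W\subseteq N(u)\setminus\{v\}$. The third is the result $\mathcal{G}'=\mathcal{H}-u$.

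I will use the standing hypothesis of \textit{ExactCoarsening}: neighborhood coning is invoked only after \textit{GStrongCollapse} and \textit{GEdgeCollapse} have been exhausted, so $\mathcal{G}$ contains no dominated node. I will also use that each inserted $(v,w)$ is a dominated edge, i.e.\ there is a $z_w\notin\{v,w\}$ with $N_{\mathcal{H}}[v,w]\subseteq N_{\mathcal{H}}[z_w]$.

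The proof goes by contradiction. Suppose some $a$ is dominated by some $b$ in $\mathcal{G}'$, that is, $N_{\mathcal{G}'}[a]\subseteq N_{\mathcal{G}'}[b]$. Since $N_{\mathcal{G}'}[x]=N_{\mathcal{H}}[x]\setminus\{u\}$ for every $x\neq u$, there are two possibilities. Either $N_{\mathcal{H}}[a]\subseteq N_{\mathcal{H}}[b]$, or $u\in N_{\mathcal{H}}[a]\setminus N_{\mathcal{H}}[b]$; in the latter case $a\in N(u)$ and $b\notin N[u]$. I then track how closed neighborhoods change from $\mathcal{G}$ to $\mathcal{H}$. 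Only $v$ and the nodes of $W$ gain neighbors: $v$ gains $W$, and each $w\in W$ gains $v$. This gives a short case split according to whether $a$ and $b$ lie in $\{v\}$, in $W$, in $N(u)\setminus W$, or outside $N[u]$.

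In the "generic" cases, neither $a$ nor $b$ gains a neighbor, or only $b$ gains one that $a$ does not have. There $N_{\mathcal{H}}[a]\subseteq N_{\mathcal{H}}[b]$ pulls back to $N_{\mathcal{G}}[a]\subseteq N_{\mathcal{G}}[b]$, up to the single added vertex. This contradicts the absence of dominated nodes in $\mathcal{G}$. The same holds when the extra vertex $v$ gained by $b=w$ is not in $N_{\mathcal{G}}[a]$.

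The main obstacle is the configurations where the inserted edge is exactly what creates the inclusion. The prototypical one is $b=w\in W$ with $a\sim v$ in $\mathcal{G}$, or $a=v$ itself. The second prototypical one is $a\in N(u)$, $b\notin N[u]$, where $u$ is the only obstruction in $\mathcal{G}$.

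For these I would first try to use the dominator $z_w$ of the inserted edge. If $N_{\mathcal{G}'}[a]\subseteq N_{\mathcal{G}'}[w]$ and $v\in N[a]$, then $N[a]\setminus\{\text{new edges}\}$ is contained in $N[v,w]$ (or in $N[v]$). So $a$ would already be dominated in $\mathcal{G}$ by $z_w$ or by $v$, after checking that $u$ and the other inserted vertices lie in the relevant neighborhoods. For the case where $u$ is the obstruction, I would exploit that $N_{\mathcal{H}}[u]\subseteq N_{\mathcal{H}}[v]$: every neighbor of $u$ becomes adjacent to $v$. Combined with the dominated-edge condition, this should force $a$'s dominator to lie inside $N[u]$, contradicting $b\notin N[u]$.

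If this direct argument fails in a corner case, I would fall back on the formulation actually used in Algorithm~\ref{alg:insertde}. Its processing order (ascending degree) and the requirement that the dominators $z_w$ differ from $u$ give the extra containment needed to close that case.
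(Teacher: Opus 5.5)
Your reduction to the two alternatives is correct, and so is your treatment of the generic cases. The trouble is the configuration you leave open: $a\in N(u)$, $b\notin N[u]$, with $u$ the only element of $N[a]\setminus N[b]$. It cannot be closed from the hypotheses you actually use (no dominated node in $\mathcal{G}$, each inserted edge dominated in $\mathcal{H}$), because the statement is false there. Take vertices $u,v,a,b,p,z,w$ and edges $uv,ua,uw,uz,va,vb,vz,ab,ap,bp,bz,pz,pw$. The closed neighbourhoods are $N[u]=\{u,v,a,w,z\}$, $N[v]=\{u,v,a,b,z\}$, $N[a]=\{u,v,a,b,p\}$, $N[b]=\{v,a,b,p,z\}$, $N[p]=\{a,b,p,z,w\}$, $N[z]=\{u,v,b,p,z\}$ and $N[w]=\{u,p,w\}$. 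The six $5$-sets are pairwise distinct, and $N[w]$ lies in neither $N[u]$ nor $N[p]$ (its only candidate dominators), so no node is dominated. Coning $u$ by $v$ requires only the edge $(v,w)$. Since $N(v)\cap N(w)=\{u\}$, after insertion $N[v,w]=\{u,v,w\}\subseteq N[u]$, so your $z_w$ is $u$ and the edge survives; also $N[u]\subseteq N[v]$. After deleting $u$, $N[a]=\{a,v,b,p\}\subseteq\{a,b,p,v,z\}=N[b]$, and $b$ is the \emph{only} dominator of $a$ (because $p\notin N[v]$ and $v\notin N[p]$). So the dominator cannot be forced into $N[u]$, and the inserted edge, which touches neither $a$ nor $b$, gives no leverage. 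Your fallback does not help either. The lemma concerns a single reduction, so processing order is irrelevant. Nor is $z_w\neq u$ a requirement of the algorithm: in the paper's own proof the inserted edges that survive are exactly those dominated only by $u$, as here.

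The missing ingredient is the absence of dominated \emph{edges} in $\mathcal{G}$. Your standing hypothesis (\textit{GEdgeCollapse} exhausted) already grants it, but you never use it; the example above has one, $(u,a)$, dominated by $v$. It yields the containment that drives all the hard cases. Suppose some $z\neq u$ dominated an inserted edge $(v,w)$ in $\mathcal{H}$. Then $z\in N(u)\cap N(w)$. Every $c\in N(u)\cap N(w)$ is a common neighbour of $v$ and $w$ in $\mathcal{H}$, hence equal or adjacent to $z$ via an edge not incident to $v$, i.e.\ an edge of $\mathcal{G}$. So $z$ would dominate $(u,w)$ in $\mathcal{G}$, a contradiction. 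Hence $u$ is the only dominator, so every common neighbour of $v$ and $w$ lies in $N[u]$, for each $w\in W$.

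Your open case then closes. Since $v$ does not dominate the edge $(a,u)$, some $v''\in N(a)\cap N(u)$ is non-adjacent to $v$, so $v''\in W$. Then $v,v''\in N[a]\setminus\{u\}\subseteq N[b]$ makes $b$ a common neighbour of $v$ and $v''$ outside $N[u]$. The other hard cases close the same way:
\begin{itemize}
\item For $a\in N(u)\cap N(v)$ and $b=w\in W$, the containment makes $u$ dominate the edge $(a,v)$.
\item For $a\in N(u)\cap N(v)$ and $b=v$, it makes $u$ dominate $(a,w)$ for any $w\in W\cap N(a)$; if there is no such $w$, then $N[a]\subseteq N[v]$ already.
\item The remaining non-generic cases ($a\notin N[u]$, $a\in W$, $a=v$) produce either a common neighbour of $v$ and some $w\in W$ outside $N[u]$, or $N[a]\subseteq N[u]$ in $\mathcal{G}$.
\end{itemize}

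This relies on your convention that inserted edges are dominated in $\mathcal{H}$. The per-edge test of Algorithm~\ref{alg:insertde} does not give the containment, so do not retreat to it. For comparison, the paper's proof has the same hole as your draft. It assumes only that no dominated node remains and asserts that only $N[v]$ and $N[w]$ change. It therefore compares only $v$ and $w$ with each other and never treats neighbours of $u$ that lose $u$, which is exactly where the example lives.
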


\subsection{Accelerating GNN training.}

\textbf{Why STPGC preserves GNN performance.} We now apply STPGC to accelerate GNN training. To motivate this application, we first analyze why STPGC is capable of maintaining GNN performance on the coarsened graph. The core mechanism of GNNs lies in computing node embeddings by aggregating the features of a node's multi-hop neighbors,  referred to as its receptive field \cite{ma2021improving}.  To maintain the performance of GNNs, the receptive field  of each node in the original graph should be preserved as much as possible in the coarsened graph. Note that the deleted nodes are merged into supernodes. If a supernode remains within a node’s receptive field, the original nodes it represents are also within that receptive field. The receptive field preservation can be  characterized by the shortest path distances between nodes: if the shortest path distance between any pair of nodes does not increase after coarsening, the receptive fields of all nodes are preserved.  Next, we introduce Lemma \ref{shortest_path} and Lemma \ref{shortest_path_2}, which respectively show the properties of \textit{GStrongCollapse}, \textit{NeighborhoodConing}, and \textit{GEdgeCollapse} in preserving the shortest path distances.


\begin{lemma}
    \label{shortest_path}
After reducing a node via GStrongCollapse or  Neighborhood Coning, the shortest path distance between any pair of nodes in the remaining graph does not increase.
\end{lemma}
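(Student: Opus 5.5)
We prove the statement by path rerouting: take any shortest path in the graph before the reduction and convert it into a path of no greater length that avoids the removed node $u$ and uses only edges present after the reduction. Let $\mathcal{G}$ be the graph before the step and $\mathcal{G}'$ the graph after. Fix two remaining nodes $a,b \neq u$ and a shortest $a$--$b$ path $P$ in $\mathcal{G}$. If $P$ avoids $u$, then every edge of $P$ survives in $\mathcal{G}'$: strong collapse deletes only edges incident to $u$, and coning only adds edges besides deleting $u$. Hence $d_{\mathcal{G}'}(a,b)\le d_{\mathcal{G}}(a,b)$ in this case. Otherwise $u$ is an interior vertex of $P$, and since $P$ is shortest it visits $u$ exactly once, through a segment $x - u - y$ with $x,y \in N(u)$ and $x\neq y$.

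\textbf{Case 1: GStrongCollapse.} Here $N[u]\subseteq N[v]$ for some $v\neq u$ that remains in the graph. We split on whether $v$ is one of $x,y$.
\begin{itemize}
\item If $v\in\{x,y\}$, say $v=x$, then $y\in N[u]\subseteq N[v]$ gives the edge $(x,y)$. Replacing $x-u-y$ by $x-y$ shortens $P$ by one, which contradicts minimality. So this subcase cannot occur, though it is harmless anyway.
\item If $v\notin\{x,y\}$, then $x,y\in N[v]$ gives edges $(x,v)$ and $(v,y)$. Replacing $x-u-y$ by $x-v-y$ yields a walk of the same length avoiding $u$.
\end{itemize}
A walk contains a path of no greater length, so $d_{\mathcal{G}'}(a,b)\le d_{\mathcal{G}}(a,b)$. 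This also covers pairs whose distances measure membership in a supernode, since $u$ is merged into $v$.

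\textbf{Case 2: Neighborhood Coning.} By Definition \ref{Coning}, after inserting the edges $(v,w)$ for all $w\in N(u)\setminus\{v\}$ we have $N[u]\subseteq N[v]$ in the augmented graph $\mathcal{G}^+$, and $u$ is then removed. Inserting edges cannot increase distances, so $d_{\mathcal{G}^+}\le d_{\mathcal{G}}$. The deletion of $u$ from $\mathcal{G}^+$ is exactly a strong collapse, so Case 1 applied to $\mathcal{G}^+$ gives $d_{\mathcal{G}'}\le d_{\mathcal{G}^+}\le d_{\mathcal{G}}$.

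Finally, the statement concerns a single reduction step, so no induction is needed. A sequence of such steps follows immediately by chaining the inequalities.

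The main obstacle is bookkeeping rather than any deep step. First, the dominator $v$ must persist after the step, which holds because only $u$ is deleted. Second, in the coning case we must not also apply edge collapses to the inserted edges in the same step, since removing them could lengthen paths. Lemma \ref{shortest_path} concerns only the coning-plus-reduction of $u$, so I will state this explicitly and leave any later edge removals to the separate Lemma \ref{shortest_path_2}.
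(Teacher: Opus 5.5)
Your proposal is correct and follows essentially the same rerouting argument as the paper. A dominator adjacent to $u$ on the path would shorten it; otherwise the segment $x-u-y$ is detoured through the dominator at equal length; and neighborhood coning is handled by noting that edge insertion cannot increase distances and that the subsequent deletion of $u$ is a strong collapse. Your split on whether $v\in\{x,y\}$ is slightly tighter than the paper's split on whether the dominator lies anywhere on the path. Your caveat that the inserted edges must not also be collapsed in the same step is a fair point the paper's proof leaves implicit: its proof of Lemma~\ref{insert} speaks of removing such edges ``along with $u$'', which would fall under Lemma~\ref{shortest_path_2} instead.
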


\begin{lemma}
\label{shortest_path_2}
   After reducing an edge via GEdgeCollapse,  the shortest path distance between any pair of nodes in the remaining graph increases by at most 1.
\end{lemma}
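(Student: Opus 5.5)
The plan is a direct path-rerouting argument. Fix a pair of nodes $a,b$ that both remain after the collapse of the edge $(x,y)$, and take a shortest path $P$ from $a$ to $b$ in the graph just before the collapse. If $P$ does not use the edge $(x,y)$, then $P$ survives unchanged, because graph edge collapse removes only that single edge and no node. So $d(a,b)$ does not change, and only the case where $P$ traverses $(x,y)$ needs work.

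In that case I would use the dominance condition of Definition~\ref{def:edge}. Since $(x,y)$ is dominated, there is a node $v \notin \{x,y\}$ with $N[x,y] \subseteq N[v]$. Because $x$ and $y$ are adjacent, both lie in $N[x]\cap N[y] = N[x,y]$, and hence in $N[v]$. As $v\neq x,y$, this means $(v,x)$ and $(v,y)$ are edges of $\mathcal{G}$. Neither coincides with the removed edge $(x,y)$, since $v\notin\{x,y\}$, so both survive the collapse. I would then replace the occurrence of $(x,y)$ in $P$ by the two-edge detour $x \to v \to y$.

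This gives a walk from $a$ to $b$ in the coarsened graph of length $|P|+1 = d(a,b)+1$. A shortest path visits each edge at most once, so $(x,y)$ appears only once in $P$ and a single substitution suffices. The new shortest-path distance is therefore at most the old one plus $1$. The result is a walk rather than a simple path, since $v$ may already lie on $P$, but it still upper-bounds the distance. Finally, deleting an edge can never decrease distances, so the change is between $0$ and $+1$, which is the statement of Lemma~\ref{shortest_path_2}.

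The only step needing care, and the one I would state explicitly, is the observation that $x,y\in N[x,y]$, which forces $v$ to be a common neighbor of $x$ and $y$. The definition is phrased purely in terms of neighborhood inclusion, so this adjacency is not immediately visible. It is also what separates the graph setting from the simplicial-complex version, where a dominating vertex is automatically in the link. Everything else is routine.
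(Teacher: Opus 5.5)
Your proposal is correct and follows the same rerouting argument as the paper: a shortest path that avoids $(x,y)$ survives unchanged, and one that uses it is patched by replacing $x \to y$ with the detour $x \to v \to y$ through the dominating node, so the length grows by at most one. Your explicit check that $x,y \in N[x,y] \subseteq N[v]$ forces $v$ to be a common neighbor of $x$ and $y$ fills in a detail the paper's proof leaves implicit.
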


 Although \textit{GEdgeCollapse} does not provide strict guarantees that the shortest path distance does not increase, the proved upper bound is the tightest possible for edge deletion, since removing any edge will inevitably increase the shortest path distance between its two nodes. Therefore, the three algorithms in STPGC maximally preserve the information within each node’s receptive field during the graph coarsening, thereby maintaining GNN performance. We note that the above shortest-path distance guarantees hold for the \textit{ExactCoarsening} phase, where every reduction strictly satisfies the dominance conditions in Definitions~\ref{def:strong} and~\ref{def:edge}.

\textbf{Approximate coarsening.} Although the above analysis shows that STPGC preserves GNN performance, the current \textit{ExactCoarsening} cannot achieve arbitrary coarsening ratios on real-world graphs, which is necessary for graph coarsening \cite{huang2021scaling}. This limitation is because: (1) there exists a minimal scale for each graph below which topological features cannot be preserved, and (2) reducing a graph to this minimal core is an NP-hard problem \cite{eugeciouglu1996computationally}. To address these challenges and ensure the practicability of STPGC for accelerating GNN training, we employ a two-stage strategy.   We first perform \textit{ExactCoarsening}, where topological features are strictly preserved. If the target coarsening ratio is not met, we then apply \textit{ApproximateCoarsening}, which relaxes the collapse conditions to enable further reduction to specified coarsening ratios. The core mechanism of this approximate phase is the $r$-relaxed strong collapse. As formalized in Definition~\ref{k-relax}, this operator relaxes the strict inclusion requirement by allowing at most $r$ nodes in $N[u]$ to be absent from $N[v]$. Notably, the 0-relaxed case is equivalent to the exact strong collapse. Therefore, we implement the \textit{RelaxedStrongCollapse} procedure (Algorithm~\ref{alg:approximate}) by adapting Algorithm~\ref{alg:strong}, specifically modifying the strict dominance condition $N[u] \subseteq N[v]$ to the relaxed criteria where “$u$ is $r$-relaxed dominated by $v$.”

\begin{algorithm2e}[t!]

\scriptsize
		\caption{STPGCForGNN}
		\label{alg:approximate}
		\KwIn{$\mathcal{G}= (\mathcal{V},\mathcal{E})$.}
		\KwOut{The coarsened graph $\mathcal{G}^c = \{\mathcal{V}^c,\mathcal{E}^c\}$.}
         $\mathcal{G}^c \gets  \mathcal{G}$, $r \gets 0$;\\
        $\mathcal{G}^c \gets$ \textit{ExactCoarsening}$(\mathcal{G}^c)$;\\

        $\mathcal{G}^c \gets$ \textit{ApproximateCoarsening}$(\mathcal{G}^c)$;\\
        \Return $\mathcal{G}^c$; \\
    \SetKwProg{Proc}{Procedure}{}{} 
         \Proc{ExactCoarsening($\mathcal{G}^c$)}{
        \While{the number of iterations $<$ $\delta_2$}{
        $\mathcal{G}^c$ $\gets$ \textit{GStrongCollapse}$(\mathcal{G}^c )$;\\
        $\mathcal{G}^c$ $\gets$ \textit{GEdgeCollapse}$(\mathcal{G}^c )$;\\
        }
      $\mathcal{G}^c$ $\gets$ \textit{NeighborhoodConing}$(\mathcal{G}^c)$;\\
\Return $\mathcal{G}^c$; \\
}

\Proc{ApproximateCoarsening($\mathcal{G}^c$)}{
        \While{the coarsening ratio is not reached}{
        $\mathcal{G}^c \gets$ \textit{RelaxedStrongCollapse}$(\mathcal{G}^c)$;\\
        $\mathcal{G}^c \gets$ \textit{GEdgeCollapse}$(\mathcal{G}^c )$;\\
        }
        \Return $\mathcal{G}^c$; \\
}
  
     \Proc{RelaxedStrongCollapse($\mathcal{G}^c$)}{   
         Apply modified graph strong collapse by relaxing the dominance condition as $u$ is \textit{r-relaxed} \textit{dominated by} $v$;\\
 \If{number of reduced nodes $< \theta_2$}{$r \gets r+1$;}
 \Return $\mathcal{G}^c$
}
\end{algorithm2e} 




\begin{definition}
    ($r$-relaxed  strong collapse) Given two adjacent nodes $u,v$,  if \( |N[v]| \geq |N[u]| > r \), and there exists a set \( S_u \) with \( |S_u| \leq r \) and \( u \notin S_u \) such that \( N[u] \setminus S_u \subseteq N[v] \), then \( u \) is  said to be \( r \)-relaxed dominated by \( v \). Removing an $r$-relaxed  dominated node is called an $r$-relaxed strong collapse.
\label{k-relax}
\end{definition}

When \textit{ExactCoarsening} can no longer reduce the graph, we extend the process by replacing the graph strong collapse  with the $r$-relaxed  strong collapse. Specifically, we iteratively apply relaxed strong collapse and edge collapse in the same manner as \textit{ExactCoarsening}, repeating this loop until the desired coarsening ratio is reached (lines 11-15).   We do not further reapply neighborhood coning, as  most chain-like structures have already been reduced, leaving little room for further reduction. As the collapse proceeds, the number of dominated nodes gradually decreases. To maintain progress, we increase $r$ by 1 whenever the number of reduced nodes in the current iteration falls below a predefined threshold $\theta_2$.



\textbf{Coarsening on Attributed Graphs.}
To incorporate node labels and features, we define each supernode's feature  as the average of the features of all nodes mapped to it, and assign its label as the most frequent label among those nodes \cite{meng2024topology}.  In labeled graphs, nodes sharing the same label are intuitively more likely to be in the same supernode.  To reflect this,  in Algorithm \ref{alg:strong} and Algorithm \ref{alg:insertde}, we prioritize checking neighbors in $N(u)$ that share the same label as $u$ when checking for node dominance and whether they can be neighborhood-coned by them.
Furthermore, since merging similar nodes naturally reduces the graph's homophily ratio, we mitigate this discrepancy by prioritizing the removal of heterophilic edges during \textit{GEdgeCollapse}.

\textbf{Complexity.} The total amortized time complexity of the STPGC framework is $\mathcal{T}_{total} = O\left((\delta_2 + \delta_3) m \theta_1^2 + n \theta_1^2 \bar{d}^2\right)$. Here, $\theta_1$ denotes the degree threshold employed to cap the computational cost of local dominance checks for graph strong/edge collapses and neighborhood coning. The parameters $\delta_2$ and $\delta_3$ represent the maximum number of iterations performed during the \textit{Exact} and \textit{Approximate} coarsening phases. In practice, $\theta_1, \delta_2, \delta_3$ are treated as  constants, and given that $\bar{d} \ll n$ in sparse real-world graphs, STPGC achieves linear scalability.

\section{Experiments}

\begin{table*}[t]
  \newcommand{\err}[1]{{\footnotesize \(\pm#1\)}}
  \setlength{\abovecaptionskip}{0.2cm}
  \setlength{\tabcolsep}{1.5pt}
  \caption{Accuracy on node classification. c denotes the coarsening ratio (number of nodes in the coarsened graph/ number of nodes in the original graph). The best results are \textbf{bold}, the second best results are underlined.}\label{tab:coarsening accuracy}
  \centering
  \resizebox{\textwidth}{!}{\begin{tabular}{llcccccccccccccccc}\toprule
    \multirow{2}*{Dataset}&\multirow{2}*{\begin{tabular}{@{}c@{}}Coarsening\\ Method\end{tabular}}&\multicolumn{4}{c}{c=1.0} & \multicolumn{4}{c}{c=0.5} & \multicolumn{4}{c}{c=0.3} & \multicolumn{4}{c}{c=0.1}\\ 
    \cmidrule(r){3-18}
    & & GCN & APPNP & SAGE & SAINT & GCN & APPNP & SAGE & SAINT & GCN & APPNP & SAGE & SAINT & GCN & APPNP & SAGE & SAINT \\ \midrule 
    \multirow{10}*{Cora}
&Var. Nei. & \multirow{10}*{80.1\err{0.3}} & \multirow{10}*{81.7\err{0.3}} & \multirow{10}*{79.4\err{0.6}} & \multirow{10}*{79.7\err{0.7}}
& \underline{81.7}\err{0.3} & 81.4\err{0.5} & 78.9\err{0.6} & 77.0\err{0.8}
& 80.7\err{0.7} & 81.5\err{0.6} &76.6\err{0.3} & 77.3\err{1.0}
& 73.3\err{1.0} & 66.6\err{0.8} &65.4\err{5.6} & 72.2\err{2.1} \\

&Var. Edg. & & & &
& 81.0\err{0.4} & 82.1\err{0.6} &\underline{82.0}\err{0.8} & 79.2\err{0.9}
& 79.0\err{0.9} & 81.0\err{1.0} &80.6\err{0.3} & 75.7\err{1.0}
& 48.6\err{4.2} & 57.3\err{2.3} &79.4\err{0.5} & 55.0\err{9.8} \\

&Alg. JC & & & &
& 81.3\err{0.3} & 82.5\err{0.5} &80.7\err{0.7} & 77.8\err{0.8}
& 79.5\err{0.6} & 80.1\err{0.6} &80.2\err{0.6} & 76.4\err{0.7}
& 66.3\err{1.7} & 69.1\err{1.6} &78.1\err{0.8} & 72.6\err{1.6} \\

&Aff. GS & & & &
& 81.3\err{0.5} & 82.4\err{0.6} &82.1\err{0.5} & 78.2\err{1.2}
& 79.5\err{0.5} & 79.5\err{0.7} &80.0\err{0.5} & 75.5\err{1.0}
& 75.2\err{0.6} & 73.5\err{1.3} &79.2\err{0.6} & 73.3\err{1.9} \\

&Kron & & & &
& 81.4\err{0.5} & 82.6\err{0.8} &80.6\err{0.6} & 77.4\err{0.7}
& 79.8\err{0.7} & 80.1\err{0.7} &\underline{80.5}\err{0.6} & 73.8\err{1.1}
& 65.0\err{1.2} & 66.7\err{0.9} &77.7\err{0.5} & 69.0\err{1.4} \\

&FGC & & & &
& 79.9\err{1.9} & 78.7\err{1.3} & 79.7\err{1.0} &\underline{79.5}\err{1.4} & 77.6\err{2.5} & 77.9\err{1.3} & 80.4\err{1.6} &\underline{78.5}\err{1.3} & 71.1\err{2.6} & 68.5\err{0.9} &78.4\err{0.7} &78.7\err{1.4}  \\

&UGC & & & & & 80.5\err{0.4} & \textbf{84.2}\err{0.7} & 79.0\err{1.8} & 80.2\err{0.4} & 80.3\err{0.5} & 81.2\err{0.8} & 77.5\err{0.9} & 80.5\err{0.7} & 76.9\err{0.4} & 78.5\err{1.2} & 68.2\err{0.8} & 74.0\err{0.7} \\
&MPG & & & & & 80.2\err{1.6} & 79.6\err{1.7} & 79.0\err{0.7} & 70.0\err{1.9} & 74.3\err{1.4} & 81.0\err{1.2} & 73.3\err{0.7} & 74.9\err{1.4} & 73.0\err{0.9} & 75.0\err{1.2} & 71.1\err{0.8} & 72.1\err{1.5} \\

&GEC & & & &
& 80.7\err{0.4} & 81.3\err{0.7} &79.4\err{1.3} & 77.9\err{0.9} & \underline{80.9}\err{0.4} & \underline{82.0}\err{0.7} & 80.3\err{1.1} & 76.7\err{1.7} & \underline{80.8}\err{0.7} & \underline{81.7}\err{0.5} &\underline{80.4}\err{1.2} & \underline{78.7}\err{1.1} \\

&STPGC & & & &
& \textbf{82.8}\err{0.4} & \underline{82.8}\err{0.3} & \textbf{82.2}\err{0.4} &\textbf{81.2}\err{1.0}
& \textbf{82.3}\err{0.5} & \textbf{82.9}\err{0.2} & \textbf{81.7}\err{0.3} &\textbf{80.9}\err{0.7}
& \textbf{82.5}\err{0.3} & \textbf{84.5}\err{0.3} & \textbf{82.3}\err{0.7} & \textbf{81.1}\err{1.3} \\
    \midrule
    \multirow{10}*{Citeseer}
    &Var. Nei. & \multirow{10}*{71.6\err{0.8}} & \multirow{10}*{71.1\err{0.4}} & \multirow{10}*{70.7\err{0.5}} & \multirow{10}*{69.1\err{0.6}} & 71.5\err{0.7} & 71.7\err{0.5} &70.8\err{0.5} & 70.4\err{1.1} & \underline{70.4}\err{0.4} & 71.1\err{0.4} &69.6\err{0.9} & 68.2\err{1.3} & 56.6\err{0.6} & 58.2\err{0.6} &61.4\err{1.3} & 66.0\err{1.5} \\
    &Var. Edg. & & & & & \underline{72.2}\err{0.5} & 71.6\err{0.6} &\underline{71.1}\err{0.6} & \textbf{71.1}\err{1.0} & 70.1\err{0.6} & 71.6\err{0.5} &70.1\err{0.7} & 69.9\err{0.8} & 50.6\err{9.7} & 50.8\err{9.8} &59.3\err{4.4} & 60.0\err{3.7} \\
    &Alg. JC & & & & & 71.2\err{0.5} & 71.4\err{0.8} &70.1\err{1.1} & 69.3\err{1.6} & 70.2\err{0.4} & \textbf{72.4}\err{0.5} &69.7\err{0.8} & 68.8\err{1.8} & 60.7\err{5.9} & 60.9\err{6.6} &63.7\err{2.6} & 68.1\err{1.2} \\
    &Aff. GS & & & & & 70.4\err{0.7} & 71.3\err{0.4} &70.5\err{1.0} & 69.7\err{1.6} & 70.3\err{0.5} & 71.2\err{0.5} &69.2\err{1.0} & 69.2\err{1.0} & 63.7\err{5.6} & 62.8\err{6.1} &69.0\err{0.9} & 67.8\err{1.8} \\
 &Kron & & & & & 72.2\err{0.5} & \underline{72.1}\err{0.3} &70.3\err{0.5} & 70.0\err{0.6} & 70.3\err{0.6} & \underline{71.7}\err{0.5} &70.4\err{0.5} & 70.0\err{0.7} & 65.8\err{1.6} & 66.4\err{0.5} &68.5\err{1.6} & 67.9\err{0.9} \\
 &FGC & & & & & 70.1\err{1.4} & 71.4\err{1.9} & 70.7\err{0.7} & 69.2\err{0.5} & 69.1\err{1.7} & 70.5\err{1.3} &69.3\err{0.8} &68.9\err{1.0} & 67.5\err{1.6} & 68.1\err{2.3} & 68.9\err{1.1} & 70.4\err{1.4} \\
&UGC & & & & & 69.7\err{1.8} & 72.5\err{1.0} & 68.5\err{1.8} & 69.0\err{1.4} & 68.4\err{0.7} & 70.4\err{1.0} & 68.0\err{0.8} & 68.3\err{1.3} & 59.9\err{1.3} & 63.5\err{1.3} & 66.5\err{0.6} & 59.8\err{0.9} \\
&MPG & & & & & 67.5\err{2.4} & 64.7\err{1.6} & 62.9\err{0.9} & 69.0\err{2.7} & 66.6\err{1.6} & 68.9\err{1.0} & 67.6\err{0.7} & 66.6\err{1.2} & 67.2\err{0.7} & 67.4\err{1.0} & 65.1\err{0.6} & 67.5\err{0.6} \\
 &GEC & & & & & 70.1\err{0.4} & 71.1\err{0.3} & 71.0\err{0.6}  &70.1\err{0.4} & 71.1\err{0.5} & 71.0\err{0.2} &\underline{71.1}\err{0.9} &\underline{70.1}\err{0.8} & \underline{71.6}\err{0.2} & \underline{72.1}\err{0.5} & \underline{69.1}\err{0.7} &\underline{70.7}\err{0.9} \\
 &STPGC & & & & & \textbf{72.3}\err{0.5} & \textbf{72.3}\err{0.3} &\textbf{71.8}\err{0.4} & \underline{70.6}\err{1.5} & \textbf{72.4}\err{0.6} & 71.2\err{0.2} &\textbf{72.0}\err{0.6} &\textbf{70.5}\err{1.1} & \textbf{73.2}\err{0.4} & \textbf{73.5}\err{0.2} &\textbf{74.2}\err{0.6} & \textbf{72.3}\err{1.0} \\
 \midrule
        \multirow{10}*{DBLP}
    &Var. Nei. & \multirow{10}*{79.9\err{0.2}} & \multirow{10}*{81.8\err{0.1}} & \multirow{10}*{85.1\err{0.4}} & \multirow{10}*{80.1\err{0.4}} & 82.5\err{0.6} & 83.4\err{0.5} & 80.2\err{0.9} & 82.3\err{0.7} & 81.7\err{0.6} & 82.4\err{0.6} & 78.2\err{0.8} & 81.3\err{0.5} & 79.6\err{0.5} & 80.4\err{0.5} & 71.0\err{3.2} & 71.5\err{1.4} \\
    &Var. Edg. & & & & & 82.2\err{0.8} & 84.2\err{0.7} & 80.8\err{0.7} & 82.7\err{0.5} & 80.6\err{0.3} & 82.6\err{0.8} & 73.8\err{1.5} & 79.1\err{0.8} & 79.4\err{0.5} & 81.9\err{0.6} & 65.0\err{2.6} & 67.3\err{1.5} \\
    &Alg. JC &  & & & & 80.6\err{0.7} & 83.7\err{0.5} & 81.4\err{0.5} & 82.2\err{0.6} & 80.2\err{0.7} & 82.0\err{0.6} & 76.3\err{1.4} & 80.3\err{0.7} & 78.1\err{0.8} & 79.5\err{0.5} & 70.7\err{1.6} & 71.4\err{1.6} \\
    &Aff. GS &  & & & & 82.1\err{0.5} & 84.6\err{0.5} & 82.2\err{0.6} & 82.5\err{0.3} & 81.1\err{0.5} & \underline{83.7}\err{0.3} & 78.2\err{1.5} & 80.9\err{0.5} & 79.2\err{0.6} & 80.2\err{0.4} & 68.3\err{1.4} & 67.8\err{0.9} \\
 &Kron &   & & & & 80.6\err{0.6} & 84.0\err{0.4} & 81.6\err{0.5} & 82.0\err{0.7} & 80.6\err{0.7} & 81.9\err{0.8} & 75.6\err{1.5} & 80.0\err{0.7} & 77.8\err{0.4} & 78.8\err{0.8} & 70.4\err{1.2} & 68.8\err{1.8} \\
 &FGC & & & & & 82.0\err{0.3} & 83.9\err{0.6} &81.4\err{0.5} &82.3\err{0.6} & 81.8\err{0.7} & 83.2\err{0.6} &80.3\err{0.4} &80.6\err{0.9} & \underline{81.8}\err{0.9} & \underline{82.1}\err{0.6} &78.4\err{1.1} &77.2\err{1.4} \\
&UGC & & & & & \underline{84.5}\err{0.1} & \textbf{85.6}\err{0.2} & 82.3\err{0.5} & \textbf{85.2}\err{0.1} & 83.0\err{0.2} & 82.3\err{0.2} & 80.6\err{0.7} & \textbf{83.5}\err{0.1} & 74.3\err{0.3} & 72.1\err{0.6} & 64.7\err{0.7} & 73.3\err{0.5} \\
&MPG & & & & & \textbf{84.8}\err{0.8} & 84.8\err{0.5} & \textbf{83.4}\err{0.3} & 83.2\err{0.6} & 80.3\err{0.8} & 80.3\err{0.5} & 77.9\err{0.6} & 79.4\err{0.5} & 75.0\err{0.8} & 74.5\err{0.6} & 71.9\err{0.6} & 74.9\err{0.5} \\
 &GEC & & & & & 83.0\err{0.7} & 84.3\err{0.3} &81.7\err{0.7} & \underline{83.2}\err{0.4} & \underline{82.4}\err{0.6} & 83.4\err{0.2} &\textbf{83.2}\err{0.6} &\underline{83.1}\err{0.8} & 80.5\err{2.9} & 78.9\err{0.8} &\underline{81.5}\err{0.6} & \underline{82.9}\err{0.7} \\
 &STPGC & & & & & 83.9 \err{0.2} & \underline{85.0}\err{0.1} &\underline{83.0}\err{0.4} &82.0\err{0.9}  & \textbf{84.5}\err{0.1} & \textbf{85.2}\err{0.1} &\underline{82.7}\err{0.2} &82.5\err{0.5} & \textbf{83.1}\err{0.1} & \textbf{83.6}\err{0.2} &\textbf{82.4}\err{0.3} &\textbf{84.6}\err{0.3} \\
\midrule
        \multirow{7}*{ArXiv}
    &Other methods & \multirow{7}*{70.9\err{0.6}} & \multirow{7}*{62.2\err{0.2}} & \multirow{7}*{66.7\err{2.6}} & \multirow{7}*{70.0\err{0.7}} & \multicolumn{12}{c}{Out of memory (Over 256GB)} \\
    &Var. Nei. & & & & & 64.5\err{1.8} & 54.6\err{1.5} &62.3\err{1.6} &61.6\err{2.4} & 64.8\err{2.5} & 59.6\err{0.6} &62.1\err{0.7} &63.1\err{0.4} & 45.4\err{3.4} & 51.6\err{2.4} &44.1\err{2.1} & 50.8\err{2.4} \\
    &Var. Edg. & & & & & 65.4\err{0.9} & 59.7\err{1.0} &63.4\err{1.0} & 64.1\err{1.4} & 61.9\err{0.6} & 55.3\err{0.8} &60.8\err{0.7} &62.6\err{0.5} & 50.3\err{0.7} & 54.9\err{0.9} &49.6\err{2.8} & 51.4\err{1.8} \\
    &Alg. JC & & & & & 65.3\err{0.7} & 60.9\err{0.6} &63.7\err{1.8} & 64.6\err{1.8} & 61.1\err{0.6} & 56.4\err{0.6} &60.6\err{2.4} &61.0\err{3.1} & 48.5\err{1.8} & 57.3\err{0.5} &50.1\err{0.9} & 54.3\err{0.7} \\
 &Kron & & & & & 64.4\err{0.6} & 61.1\err{0.7} &64.2\err{1.2} &64.6\err{0.8} & 56.1\err{0.6} & 58.5\err{0.7} &54.9\err{0.6} &54.6\err{0.9} & 57.7\err{0.5} & 58.6\err{0.9} &55.3\err{0.6} &57.9\err{0.4} \\
 &GEC & & & & & \textbf{70.1}\err{0.5} & \underline{61.9}\err{0.5} &\underline{66.7}\err{0.8} &\underline{66.3}\err{0.3} & \textbf{68.3}\err{0.5} & \underline{60.1}\err{0.1} &\underline{65.3}\err{0.4} &\underline{63.6}\err{0.5} & \textbf{65.1}\err{0.3} & \textbf{59.4}\err{0.6} &\underline{62.4}\err{0.8} &\underline{63.3}\err{1.1} \\
 &STPGC & & & & & \underline{69.6}\err{0.3} & \textbf{62.4}\err{0.5} &\textbf{67.8}\err{1.6} &\textbf{66.8}\err{0.3} & \underline{66.5}\err{0.5} & \textbf{60.9}\err{0.6} &\textbf{65.6}\err{0.6} &\textbf{64.3}\err{0.5} & \underline{63.2}\err{0.8} &  \underline{58.8}\err{1.3} &\textbf{63.2}\err{0.3} & \textbf{64.8}\err{0.3} \\
\midrule
       \multirow{3}*{Products}
    &Other methods & \multirow{3}*{74.5\err{0.4}} & \multirow{3}*{65.4\err{0.5}} & \multirow{3}*{72.6\err{0.7}} & \multirow{3}*{80.1\err{0.4}} & \multicolumn{12}{c}{Out of memory (Over 256GB)} \\
      &GEC & & & & & 72.1\err{0.5} & 64.3\err{0.3} &72.9\err{0.2} & 72.2\err{0.5}  & 71.4\err{0.6} & 62.2\err{0.3} &71.1\err{0.4} &71.3\err{0.5} & 70.1\err{0.7} & 60.1\err{0.4} &65.0\err{1.1} & 68.2\err{0.4} \\
      &STPGC & & & & & \textbf{75.2}\err{0.2} & \textbf{65.6}\err{0.4} &\textbf{74.0}\err{0.5} &\textbf{73.7}\err{0.7} & \textbf{74.3}\err{0.2} & \textbf{64.7}\err{0.4} &\textbf{72.6}\err{1.1} &\textbf{72.4}\err{0.4} & \textbf{70.4}\err{0.1} & \textbf{60.4}\err{0.4} &\textbf{68.0}\err{0.1} & \textbf{67.6}\err{0.5} \\
    \bottomrule
  \end{tabular}
    }
\end{table*}
\textbf{Settings.}
We select five labeled  benchmarks (Cora, Citeseer, DBLP, ogbn-arXiv, and ogbn-products) for  node classification, and four additional unlabeled large datasets (Youtube, LiveJournal, cit-Patent, Flixster) for  scalability evaluation, as summarized in Table \ref{dataset2}. For the GNN models, we select the full-batch GNNs GCN \cite{kipf2017semi} and APPNP \cite{gasteiger2018predict}, following prior work \cite{huang2021scaling,meng2024topology}, as well as the mini-batch GNNs GraphSAGE \cite{hamilton2017inductive} and GraphSAINT \cite{zenggraphsaint}.  We select nine graph coarsening baselines for a comprehensive comparison, including Variation Neighborhoods \cite{huang2021scaling,loukas2019graph}, Variation Edges \cite{huang2021scaling,loukas2019graph}, Algebraic JC \cite{huang2021scaling,loukas2019graph}, Affinity GS \cite{huang2021scaling,loukas2019graph}, Kron \cite{huang2021scaling,loukas2019graph}, FGC \cite{kumar2023featured}, UGC \cite{DBLP:conf/nips/KatariaKJ24}, MPG \cite{DBLP:journals/corr/abs-2405-18127} and the SOTA method GEC \cite{meng2024topology}. For node classification, the GNN is trained and validated on the coarsened graph using super-node labels mapped from the original training and validation nodes, then directly applied to the original full graph for inference. All GNNs are implemented using the PyTorch Geometric framework, following the standard benchmarking protocol established by Huang et al. \cite{huang2021scaling}. Regarding the STPGC hyperparameters, we set the thresholds $\theta_1$  based on dataset density: 15 for Cora and Citeseer, 25 for DBLP, 50 for ogbn-arxiv, and 100 for ogbn-products. $\theta_2$ is fixed at 1\% of the total node count for all datasets.    The strong collapse process inherently increases graph density. To mitigate this and stabilize GNN training, we applied a  DropEdge  strategy  on the coarsened graphs. We randomly deleted a portion of edges in the coarsened graph. Analogous to edge collapse, we prioritize the removal of heterophilic edges, setting the random drop ratio to 0.1. Code is available at https://github.com/BITNEO/STPGC.

\textbf{Node Classification.} 
The results are shown in Table \ref{tab:coarsening accuracy}, where STPGC consistently outperforms baselines across most datasets and coarsening ratios. In particular, topology-preserving methods (STPGC and GEC) outperform all other baselines, highlighting the importance of preserving topological features. Moreover,  STPGC outperforms   GEC by an average of 1.73\% on five datasets, suggesting that STPGC  more effectively preserves topological features. Although GEC is also designed for  this purpose, its initial graph partition process inevitably disrupts the topological features (Figure \ref{fig:betti}). This may explain its inferior performance compared to STPGC.  An exception is observed on the ogbn-arXiv dataset at  $c= 0.1$ and $c=0.2$. This may be attributed to the large number of reduced nodes, leading to loss of  topological features in the coarsened graph.

\begin{table}[t]
\centering
\caption{Running time (s) of different coarsening methods. }
\small
\resizebox{\linewidth}{!}{
\begin{tabular}{llcccc}
\toprule
\textbf{Dataset} & \textbf{Coarsening Method}  & \textbf{c=0.5} & \textbf{c=0.3} & \textbf{c=0.2} & \textbf{c=0.1} \\
\midrule
\multirow{7}{*}{DBLP}
 & Variation Neighborhoods &  23.329  & 33.918 & 30.863 & 34.726 \\
 & Variation Edges         &  14.626 &  15.945 & 19.072 & 29.070  \\
 & Algebraic JC            &  22.048 & 34.683& 35.978&44.321 \\
 & Affinity GS             &  23.627  & 18.394 & 19.846&18.638 \\
 & FGC         &   956.227 & 473.301 & 202.215 & 95.336  \\

 & GEC &   8.261 & 18.820 & 36.026 & 100.417  \\
  & STPGC         &   3.649 & 4.821 & 5.519 & 6.371 \\
\midrule
\multirow{6}{*}{ogbn-arXiv}
 & Variation Neighborhoods  & 393.1 & 475.4 & 496.1 & 535.6  \\
 & Variation Edges          & 546.8 & 692.1 & 767.8 & 861.2 \\
 & Algebraic JC            & 557.9 & 710.7 & 815.1 & 995.2 \\
 & Affinity GS / FGC       & \multicolumn{4}{c}{Out of Memory ( Over 256GB )} \\

 & GEC  & 129.6 & 323.4 & 883.7 & 2474.5 \\
 & STPGC         & 75.4   &178.6 & 207.8 & 286.3  \\
\bottomrule
\end{tabular}
}
\label{runtime}
\end{table}

\begin{figure}[t]
    \centering
    \includegraphics[width=\linewidth]{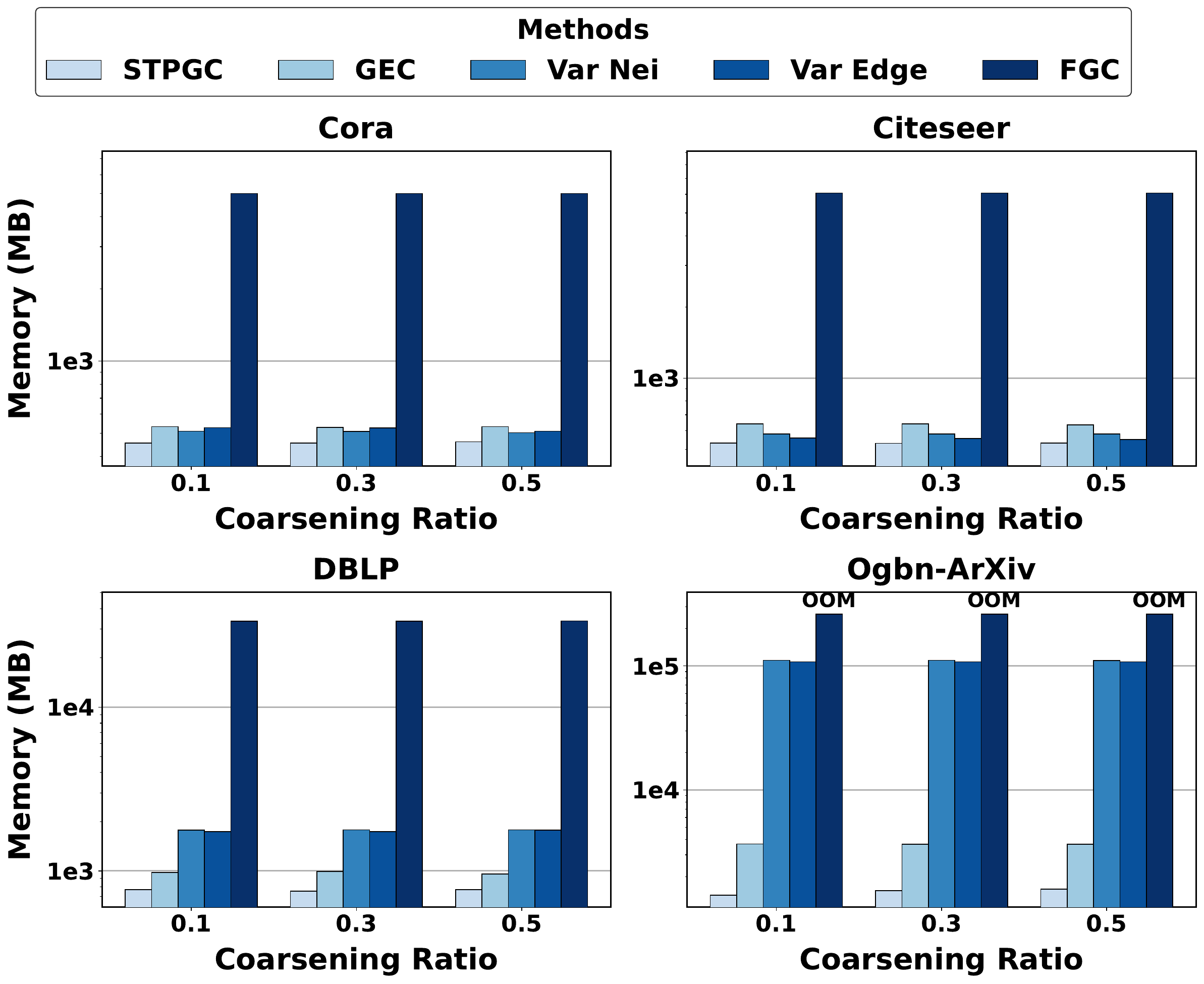} 
    \caption{
Memory overhead of STPGC and baseline methods.} 
    \label{fig:memory} 
\end{figure}

\textbf{Efficiency of Different Methods.} Table~\ref{runtime} and Figure~\ref{fig:memory} present the runtime and memory consumption of different methods. As illustrated, STPGC consistently outperforms all baselines in both runtime and memory efficiency. In particular, compared to GEC, STPGC exhibits a much slower increase in runtime as the coarsening ratio 
$c$ decreases, making it more efficient for lower 
coarsening ratios. For instance, when 
$c=0.1$, STPGC is over 15x faster than GEC on the DBLP dataset and 8.7x faster on ogbn-arXiv.

  

  

\textbf{Betti Number.}  Betti numbers quantify the topological features of a graph. To evaluate how effectively different methods preserve topology, we compute the 1-Betti numbers of the coarsened graphs under varying coarsening ratios. As shown in Figure~\ref{fig:betti}, STPGC  strictly maintains the number of topological features as the coarsening ratio decreases before \textit{ExactCoarsening} exits.  In contrast, other methods lose topological features  at the beginning of coarsening. GEC shows a slower decline compared to non-topology-preserving methods, as it employs elementary collapse within partitioned subgraphs, which  retains some local topology. However, its partitioning process inherently disrupts global topology, resulting in inferior preservation. These results highlight the superior ability of STPGC to maintain topological features.

\begin{figure}[t!]
    \centering
    \includegraphics[width=\linewidth]{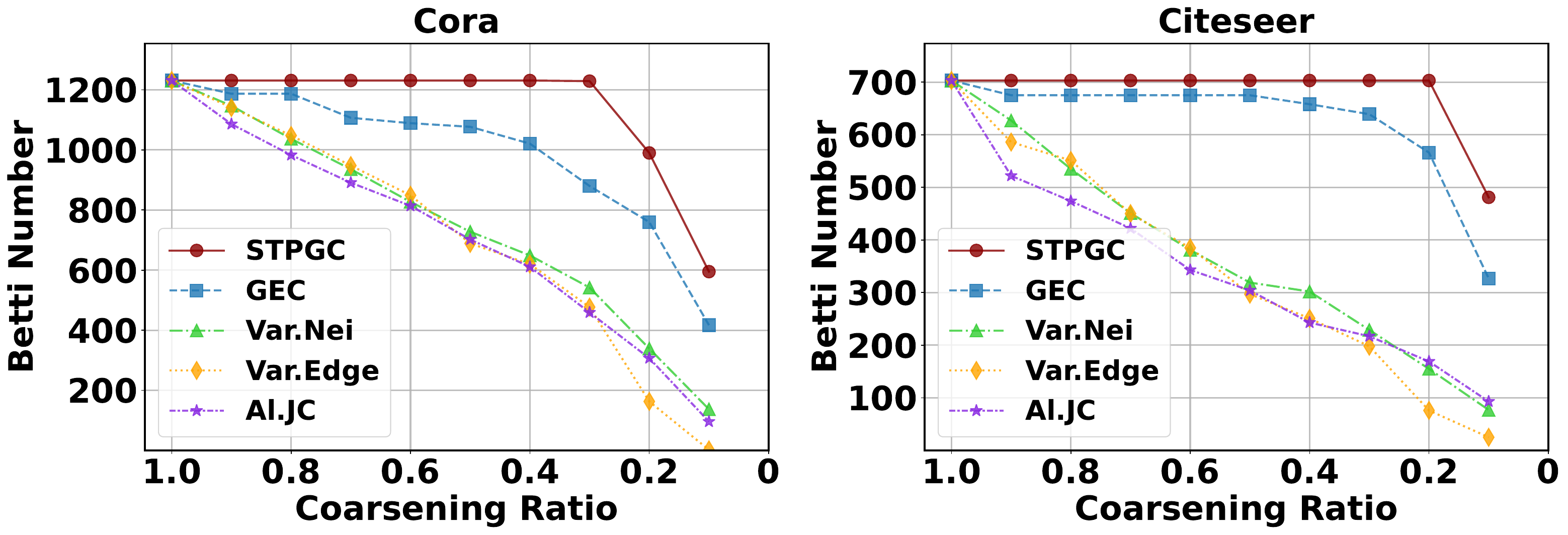} 
    \caption{
Betti number preserved by different methods.} 
    \label{fig:betti} 
\end{figure}

\textbf{Parameters Experiment.}
We investigate the sensitivity of STPGC to the degree threshold parameters ($\theta_1$), which govern the scope of the \textit{ExactCoarsening} phase.  Figure \ref{fig:parameter_gnn} illustrates the impact of varying $\theta_1$ on GCN node classification accuracy and runtime across different target coarsening ratios ($c \in \{0.1, 0.3, 0.5\}$).
As observed in the figure, increasing $\theta_1$ generally leads to improved classification accuracy across all three datasets. This suggests that allowing a broader range of high-degree nodes to participate in strong collapses facilitates the preservation of richer topological features, enhancing downstream GNN performance. However, this topological fidelity comes at the cost of increased runtime, as larger thresholds permit more extensive dominance checks. Notably, the accuracy gains tend to plateau once $\theta_1$ exceeds the degree of most nodes in the graph. Empirically, we find that a robust balance between accuracy and efficiency is achieved when $\theta_1$ is set to approximately 2–6 times the average degree of the respective dataset.


\textbf{Scalability Analysis.} We further evaluate the scalability of STPGCForGNN (Algorithm~\ref{alg:approximate}) on four large datasets (LiveJournal, Youtube, cit-Patent and Flixster). We run STPGC on subgraphs sampled at different fractions of the original graph ($10\%$ to $100\%$ nodes), and measure runtime and peak memory on each. Subgraphs are constructed via BFS-based sampling that ensures linear growth of both $|V|$ and $|E|$ with the sampling ratio. The coarsening ratio is set to 0.3, and $\theta_1$ is fixed at 50. Figure~\ref{fig:scalability_STPGC} reports runtime and peak memory usage. We then compare STPGC (with varying degree thresholds $\theta_1$) with the SOTA  graph coarsening method GEC (with  original parameters) on the cit-Patent dataset (Figure \ref{fig:scalability_patent}). Finally, we compare STPGC with GEC on four datasets with different coarsening ratios (Figure \ref{fig:scalability_ratio}). Across all datasets, memory grows monotonically with graph size, demonstrating stable space scalability. Runtime scales near-linearly for Youtube and LiveJournal throughout, and for cit-Patent and Flixster at lower sampling ratios. For the latter two, a modest runtime decline is observed at high ratios ($\ge 0.6$). This is because BFS sampling first captures the     
 dense core, after which peripheral nodes incur 
 negligible overhead due to degree thresholding, as      
 confirmed by the monotonic increase in memory usage. Compared to GEC, STPGC achieves an order-of-magnitude speedup in most configurations, and delivers up to a 37x acceleration on the cit-Patent dataset. These results validate the superior scalability of STPGC on large-scale graphs.

\begin{figure}[t!]
    \centering
    \includegraphics[width=\linewidth]{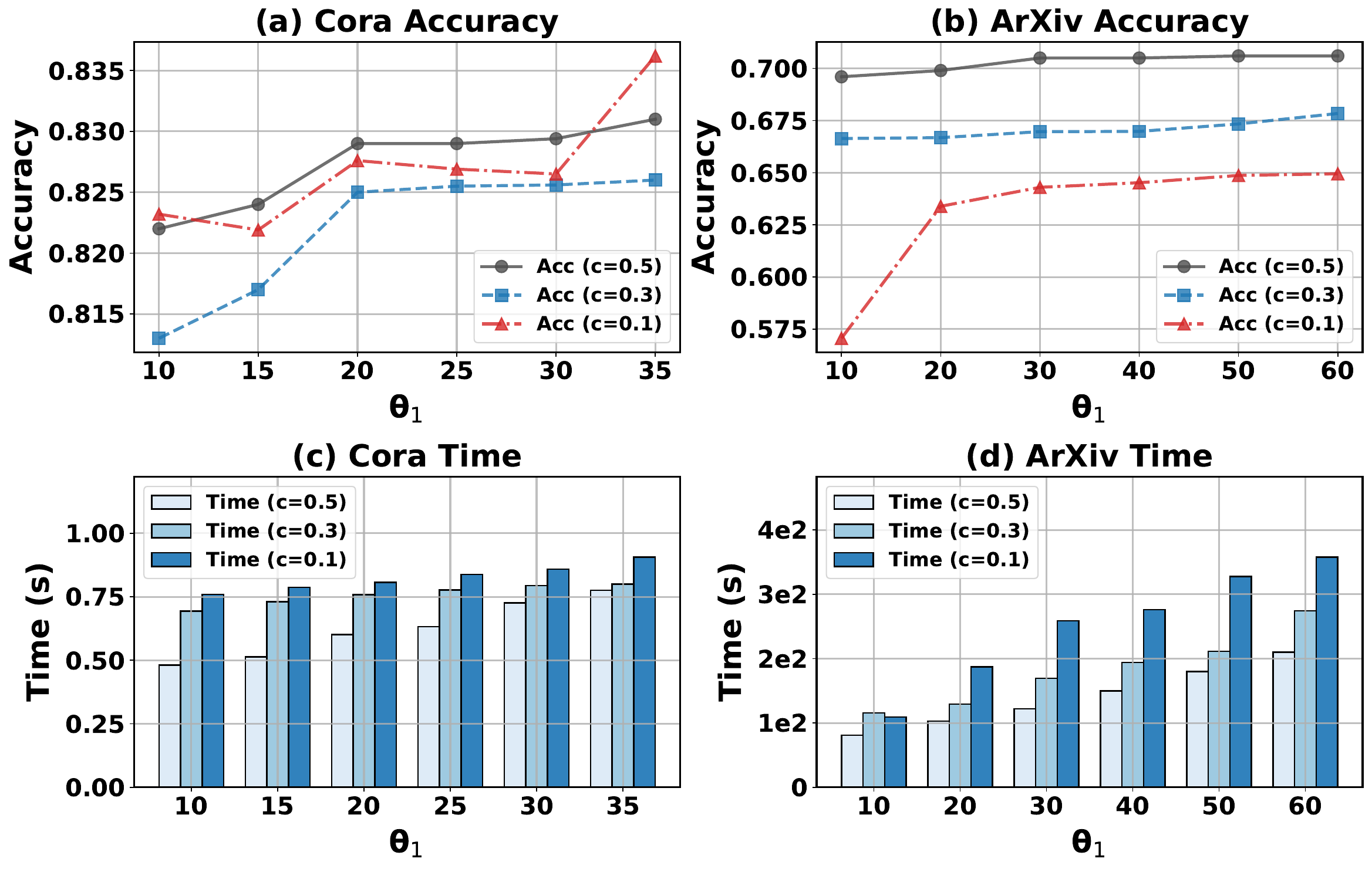} 
    \caption{
Impact of parameters on accuracy and runtime.} 
    \label{fig:parameter_gnn} 
\end{figure}

\begin{figure}[t!]
    \centering
    \includegraphics[width=\linewidth]{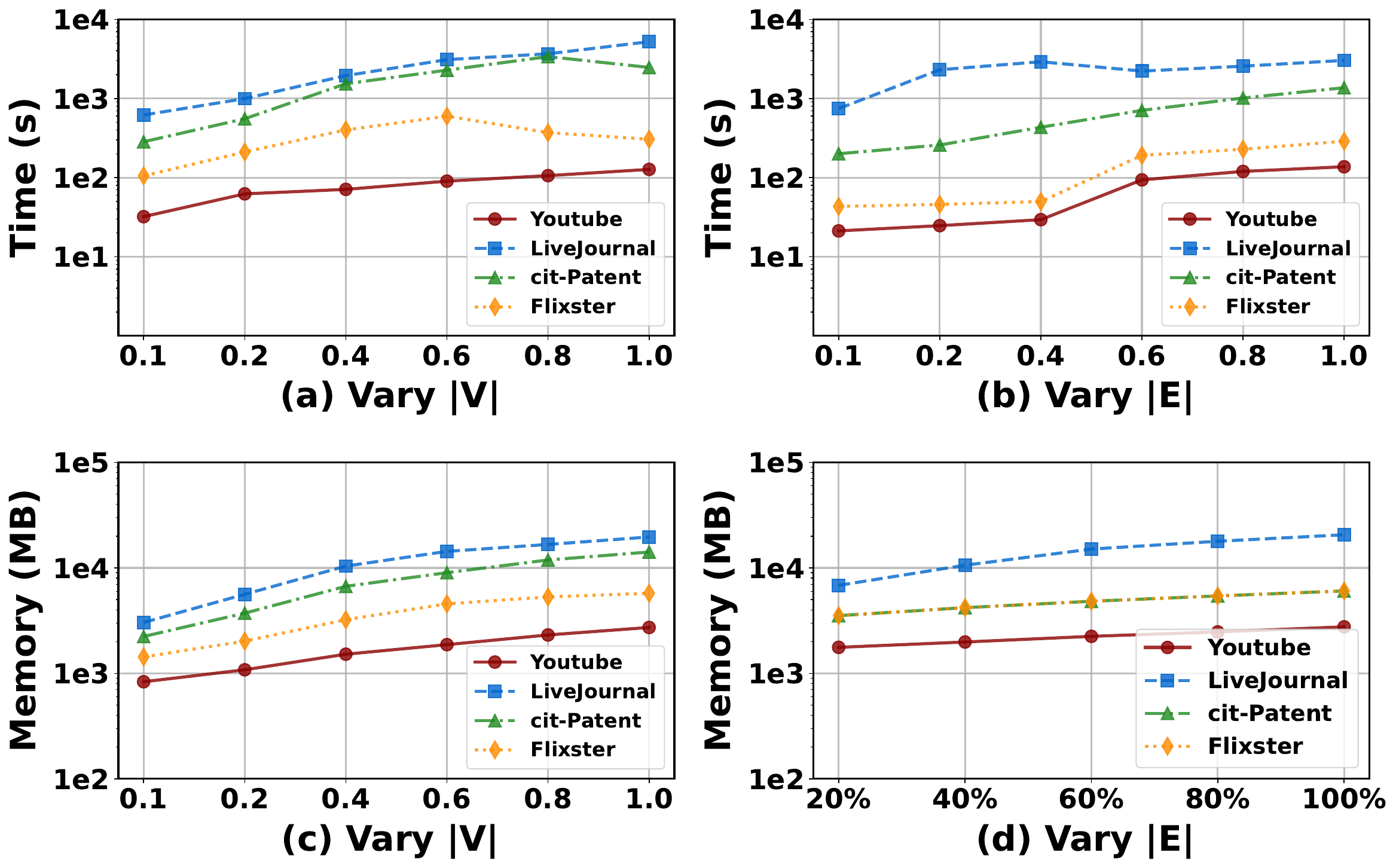} 
    \caption{
Scalability analysis of STPGC.} 
    \label{fig:scalability_STPGC} 
\end{figure}




\section{Related Work }
\noindent
\textbf{Graph Coarsening.} Existing work on graph coarsening primarily aims to preserve spectral similarity \cite{loukas2018spectrally,bravo2019unifying}. Loukas \textit{et al.} \cite{loukas2019graph} introduced restricted spectral similarity to show that coarsening can approximate the eigenstructure of the original graph. More recent studies have adopted learning-based methods for data-specific coarsening. For example, Cai \textit{et al.} \cite{cai2021graph} used deep learning to optimize edge weights through graph neural networks.  Given the difficulty of training GNNs on large-scale graphs, several approaches instead focus on preserving GNN performance during coarsening. Huang \textit{et al.} \cite{huang2021scaling} demonstrated that graph coarsening not only improves GNN scalability but also acts as a regularizer to enhance training. ConvMatch \cite{dickens2024graph} minimizes the change in graph convolution between the original and coarsened graphs. Recognizing that many real-world graphs contain node features, FGC \cite{kumar2023featured} jointly optimizes the adjacency and feature matrices to preserve both spectral and feature similarities. Finally, GEC \cite{meng2024topology} is the only method explicitly designed to preserve topological features, employing elementary collapse to remove free faces \cite{whitehead1939simplicial}.

\stitle{Dominance on Graphs.}  The  dominance of nodes has important applications in graph analysis.  Brandes \textit{et al}. \cite{brandes2017positional} leveraged it to compute node preorders, leading to the construction of threshold graphs \cite{mahadev1995threshold}, with applications in problems such as minimum stretch trees \cite{nikolopoulos2004number} and characteristic polynomial computation \cite{furer2017efficient}.  Zhang \textit{et al.} \cite{zhang2023neighborhood}  introduced the Neighborhood Skyline, consisting of nodes whose open neighborhoods are not included by the open neighborhood of any others.  In  maximum independent set computation \cite{chang2017computing, piao2020maximizing},  a node can be safely removed if its closed neighborhood fully contains another’s.  In shortest distance queries, neighborhood inclusion helps define equivalence classes, enabling graph compression and index size reduction \cite{li2019scaling}. We study  new applications of dominance on graphs for accelerating  GNN training.
\begin{figure}[t!]
    \centering
    \includegraphics[width=\linewidth]{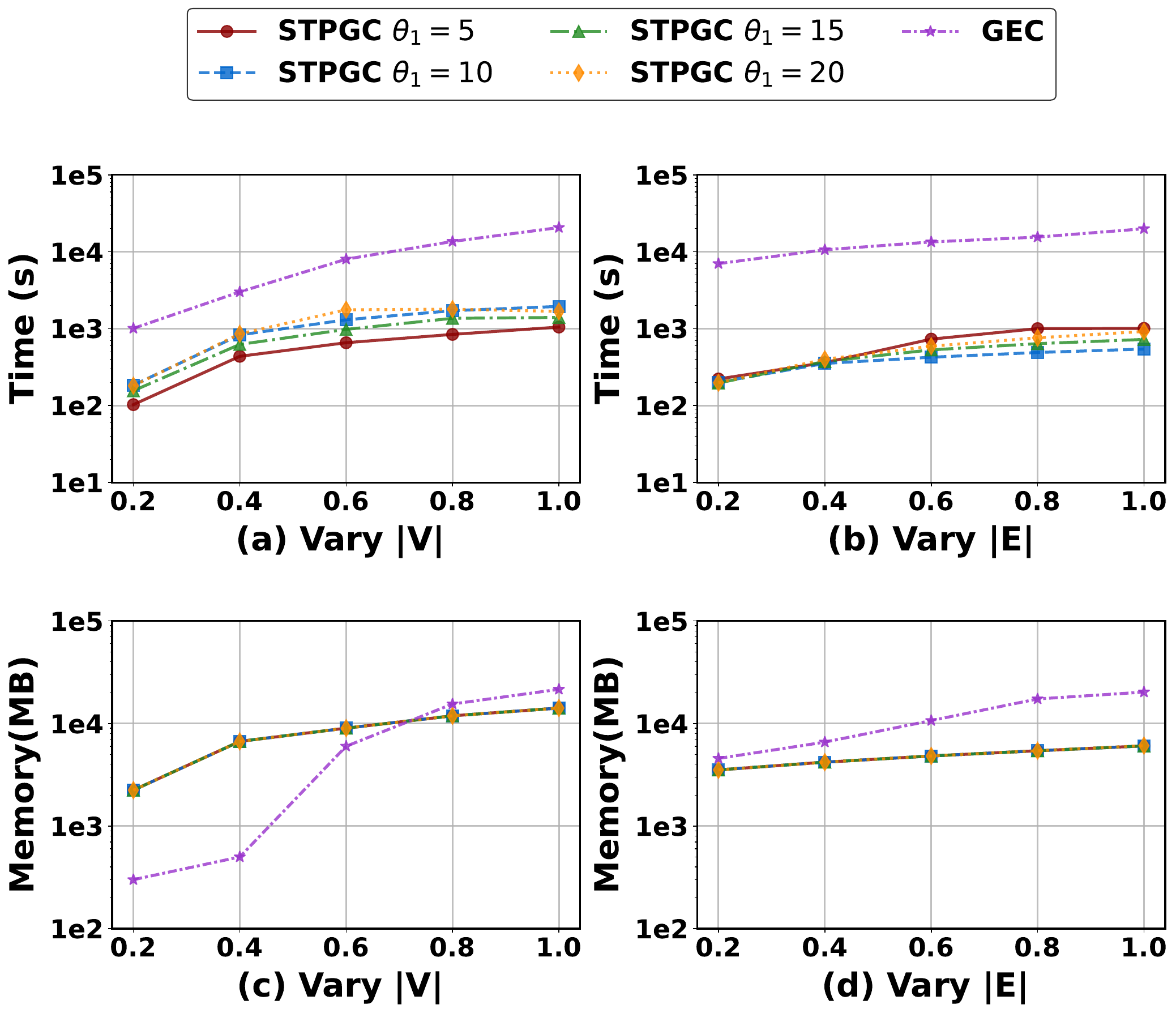} 
    \caption{
Scalability analysis of STPGC and GEC.} 
    \label{fig:scalability_patent} 
\end{figure}

 \begin{figure}[t!]
    \centering
    \includegraphics[width=\linewidth]{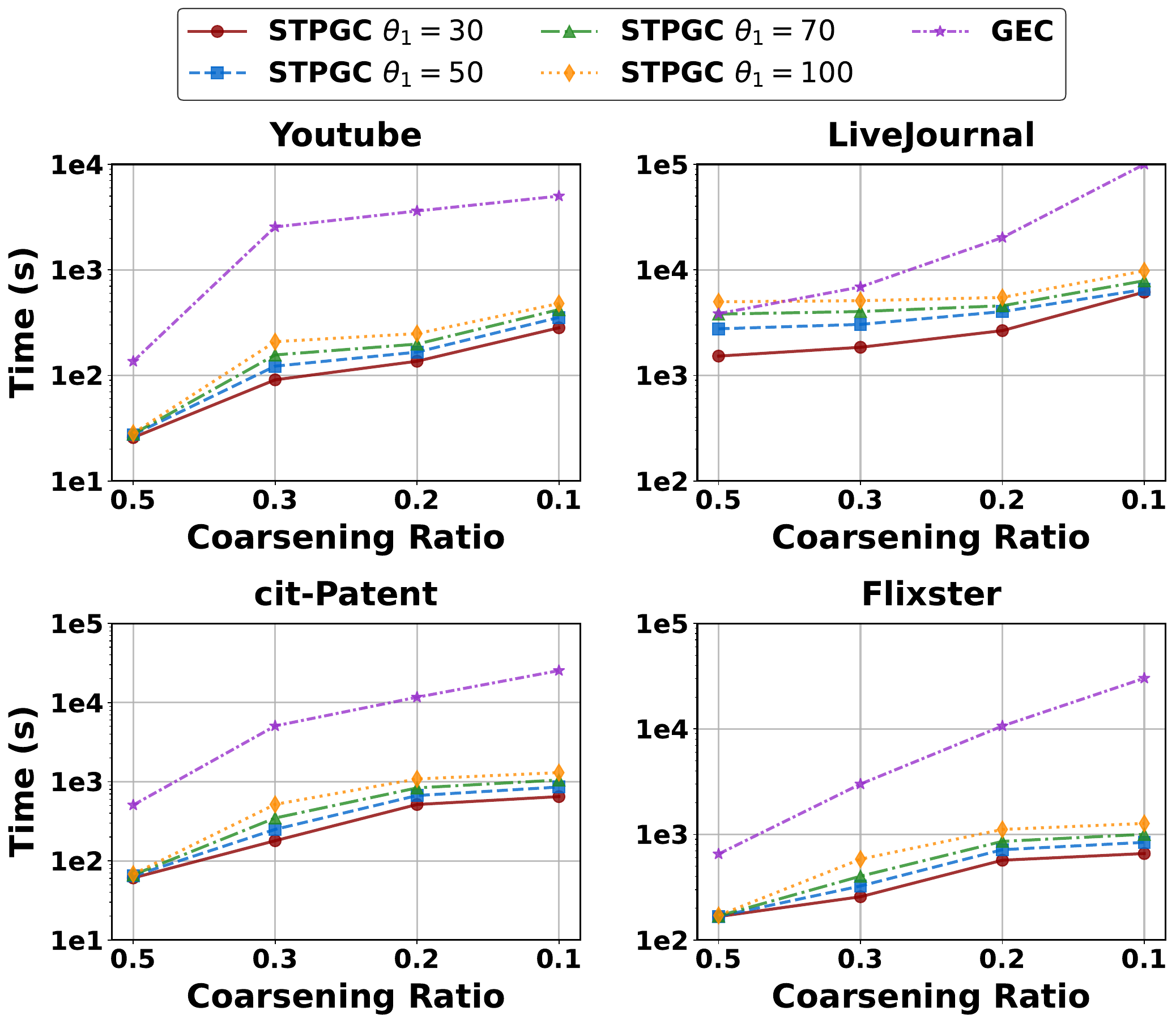} 
    \caption{
Results with varying coarsening ratios.} 
    \label{fig:scalability_ratio} 
\end{figure}

\section{Conclusion}
In this paper, we propose a scalable and effective graph coarsening algorithm (STPGC) to preserve graph topological features. STPGC is based on graph strong collapse and graph edge collapse, ensuring a theoretical guarantee of preserved homotopy equivalence on the coarsened graph. We apply STPGC to accelerate GNN training on the coarsened graphs. Notably, STPGC operates as a plug-and-play preprocessing module independent of downstream GNN architectures: the graph is coarsened once, and the reduced graph can directly replace the original graph in any GNN training pipeline without modifying the model. Extensive experiments on real-world datasets demonstrate the efficiency and effectiveness of our approach. As for limitations, STPGC uses heuristic feature averaging and majority-vote label aggregation for super-nodes, which may be suboptimal for certain tasks; incorporating learnable or task-specific aggregation could further improve performance. For future directions, we plan to apply topology-preserving graph coarsening to broader machine learning domains, such as compressing biological data for drug discovery.

\section*{Acknowledgment}
This work was partially supported by the NSFC Grants U24A20255, U2241211, 62427808, 625B2021 and 62572055. 

\section*{Impact Statement}
This paper presents a scalable topology-preserving graph
  coarsening method (STPGC) that accelerates GNN training
  on large-scale graphs. By producing significantly
 smaller coarsened graphs while preserving topological
 structures, STPGC reduces the computational resources
 and energy consumption required for downstream GNN
 training, contributing to ``Green AI.''  While this work has the potential for adoption in
  applications such as drug discovery and biological
 network analysis,
we do not foresee any immediate societal concerns associated with the proposed method.

%


\bibliography{example_paper}
\bibliographystyle{icml2026}

\newpage
\appendix
\onecolumn

\begin{center}
\Large\textbf{Appendix}
\end{center}

\vspace{0.5em}
\noindent\textbf{Outline.} This appendix is organized as follows: 
\begin{itemize}
    \item \textbf{Section A} provides formal proofs for all lemmas presented in the main paper.
    \item \textbf{Section B} presents detailed complexity analysis for each proposed algorithm: \textit{GStrongCollapse}, \textit{GEdgeCollapse}, and \textit{NeighborhoodConing}.
    \item \textbf{Section C} describes additional experimental settings, including detailed descriptions of baseline methods, dataset statistics, and implementation details.
    \item \textbf{Section D} contains additional experimental results, including node classification at c=0.2, training efficiency, link prediction, graph classification, heterophilic graph experiments, and robustness to edge perturbation.
    \item \textbf{Section E} provides a toy graph example to compare the coarsened graphs produced by different methods.
\end{itemize}

\section{Proofs}
\label{app:proofs}

\subsection{Proof of Lemma \ref{lemma1}}

\begin{proof}
The proof is based on the one-to-one correspondence between a 1-skeleton and its  clique complex. We denote the clique complex of $\mathcal{G}$ as $\mathcal{K}$. Therefore, $\mathcal{G}$ is the 1-skeleton of $\mathcal{K}$. If $\mathcal{G}^c$ is derived from a series of graph strong collapse and graph edge collapse from $\mathcal{G}$, and $\mathcal{K}^c$ is the clique complex derived through reducing the same nodes and edges via strong collapse and edge collapse, then $\mathcal{G}^c$ is also the 1-skeleton of $\mathcal{K}^c$, as $\mathcal{K}^c$ and $\mathcal{K}$ are homotopy equivalent, we have $\mathcal{G}^c$ and $\mathcal{G}$ are homotopy equivalent.
\end{proof}

\subsection{Proof of Lemma \ref{insert}}

\begin{proof}
    
First, after inserting an edge \((v, w)\), if \((v, w)\) still exists, it implies that the edge was solely dominated by node \( u \) during the insertion. Conversely, if the edge is dominated by another node, it will remain a dominated edge after deletion, and thus be removed along with \( u \).

If a new dominated node is introduced, it must be either \( v \) or \( w \), as only the neighborhoods \( N[v] \) and \( N[w] \) have changed. However, if \( w \) dominates \( v \) (or vice versa), the set \( N(w,v) \) must be non-empty. This is because \( w \) must be connected to all of \( v \)'s neighbors. Otherwise, both \( v \) and \( w \) would be 1-degree nodes, which is impossible after the graph undergoes a graph strong collapse. Let us denote the neighbors of \( v \) and \( w \) as \( a_1, a_2, \dots, a_n \in N[v,w] \) (Figure \ref{fig:lemma7}). Since \( u \) dominates \( (v,w) \), the nodes \( a_1, a_2, \dots, a_n \) must also be adjacent to \( u \). Therefore, we conclude that \( N[v] \subseteq N[u] \), which implies that \( u \) dominates \( v \). This leads to a contradiction, as no dominated node should remain after a strong collapse of the graph.
\end{proof}

\subsection{Proof of Lemma \ref{shortest_path}}

\begin{proof}
   For node deletions performed by \textit{GStrongCollapse} or \textit{Neighborhood Coning}, let the deleted node be $w$. If $w$ is not in the shortest path between any two nodes $(u, v)$ in the original graph, then the shortest path distance between $u$ and $v$ remains unchanged.  If a node $w$ is in the shortest path, we first prove the case in \textit{GStrongCollapse} and then discuss that in \textit{Neighborhood Coning}. Assume that node $w$ is dominated by node $x$. Consider any path in the original graph that contains $w$. There are two cases regarding the inclusion of $x$ in this path: \textbf{Case 1}: The path also contains $x$. Suppose the subpath in the original graph passing through $x$ and $w$ is denoted as $x, w, y$, where $y \in N[w]$. After removing $w$, since $y \in N[x]$ (implied by dominance), the path can be rerouted directly as $x, y$. This operation reduces the path length by 1. \textbf{Case 2}: The path contains $w$ but does not contain $x$. Let the subpath formed by $w$ and its neighbors be denoted as $a, w, b$. Since $N[w] \subseteq N[x]$, both neighbors $a$ and $b$ must also belong to $N[x]$. Consequently, after the removal of $w$, there exists a valid alternative path $a, x, b$. In this scenario, the length of the path remains unchanged. For the case in \textit{Neighborhood Coning}, inserting edges does not increase the shortest path distance between nodes. After inserting edges, we still remove nodes through the \textit{GStrongCollapse} operation; therefore, the shortest path distance does not increase.

\end{proof}

\subsection{Proof of Lemma \ref{shortest_path_2}}
\begin{proof}
   We denote the deleted edge as $(x, y)$ and the corresponding path as $xy$. For any pair of nodes $(u, v)$, if $xy$ is not part of their shortest path, the shortest path distance remains unchanged. However, if $(x, y)$ lies on their shortest path and is dominated by a node $w$, then the original path $xy$ can be replaced by an alternative path $xwy$. In this case, the shortest path length increases by one. Therefore, the shortest path distance between any pair of nodes $(u, v)$ increases by at most one.

\end{proof}

  \begin{figure}[h]
    \centering
    \includegraphics[width=0.5\linewidth]{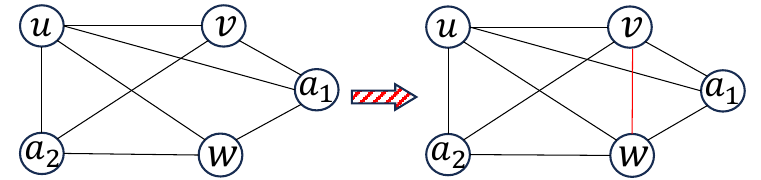} 
   
    \caption{
An example in the proof. } 
    \label{fig:lemma7} 

\end{figure}

\section{Complexity Analysis}
\label{prop:complexity}

\subsection{Complexity analysis for GStrongCollapse (Algorithm \ref{alg:strong})}

In Algorithm \ref{alg:strong}, we iterate through all nodes. For a specific node $u$, we check its dominance only if $deg(u) \le \theta_1$. The dominance check involves verifying whether $N[u] \subseteq N[v]$ for each neighbor $v \in N(u)$. Using hash sets, the intersection check for a pair $(u, v)$ takes $O(\min(deg(u), deg(v)))$ time. Since we only process $u$ where $deg(u) \le \theta_1$, this cost is bounded by $O(\theta_1)$. Summing over all edges $(u, v) \in \mathcal{E}$, the total worst-case time complexity is:
\begin{equation}
    \sum_{(u,v) \in \mathcal{E}} O(\min(deg(u), deg(v))) \le \sum_{(u,v) \in \mathcal{E}} O(\theta_1) = O(m\theta_1).
\end{equation}

\subsection{Complexity analysis for GEdgeCollapse (Algorithm \ref{alg:edge})}

In Algorithm \ref{alg:edge}, we iterate through all edges. An edge $(x, y)$ is processed only if $deg(x) + deg(y) \le 2\theta_1$. The complexity is dominated by two steps:
\begin{enumerate}
    \item Computing the common neighborhood $N(x, y) = N(x) \cap N(y)$, which takes $O(\theta_1)$ time.
    \item Checking the dominance condition: we must verify if there exists a node $v \in N(x, y)$ such that $N(x, y) \subseteq N[v]$. In the worst case, this requires iterating through all $v \in N(x, y)$ (at most $\theta_1$ nodes) and performing an inclusion check (taking $O(\theta_1)$ time).
\end{enumerate}
Thus, the cost per edge is $O(\theta_1^2)$. Summing over all $m$ edges, the total complexity is $O(m\theta_1^2)$.

\subsection{Complexity analysis for NeighborhoodConing (Algorithm \ref{alg:insertde})}

In Algorithm \ref{alg:insertde}, we iterate through nodes in ascending order of degree. A node $u$ is processed only if $deg(u) \le \theta_1$. The algorithm checks if $u$ can be coned by any neighbor $v$. This requires verifying the status of edges $(v, w)$ for all $w \in N(u) \setminus \{v\}$.
The nested loops iterate through neighbors $v$ and $w$, resulting in $O(deg(u)^2)$ pairs. For each pair, we check if edge $(v, w)$ exists or is dominated. On sparse real-world graphs, the cost of checking edge dominance is amortized to $O(\bar{d}^2)$, where $\bar{d}$ is the average degree.
Therefore, the complexity for one node $u$ is $O(\theta_1^2 \bar{d}^2)$. Summing over all $n$ nodes, the total amortized complexity is $O(n\theta_1^2\bar{d}^2)$.

\section{Additional Experimental Settings}
\label{app:settings}

\textbf{Descriptions of Baselines.}  \textit{Variation neighborhoods} and \textit{Variation edges} \cite{huang2021scaling,loukas2019graph} contract local neighborhoods of a node and edges, respectively. They aim to optimize local variation costs based on the graph Laplacian. \textit{Algebraic JC} \cite{huang2021scaling,loukas2019graph} , evaluates the similarity between nodes based on the Euclidean distance of test vectors obtained via multiple Jacobi relaxation sweeps, guiding the construction of contraction sets. \textit{Affinity GS} \cite{huang2021scaling,loukas2019graph}   follows a similar philosophy but uses a single Gauss-Seidel sweep to compute the test vectors, offering a more efficient approximation of vertex proximity. \textit{Kron} \cite{huang2021scaling,loukas2019graph} reduction selects a subset of vertices and applies Schur complement-based Laplacian reduction to rewire the graph structure; while this approach provides strong theoretical guarantees, but it results in denser graphs and incurs high computational cost, making it less scalable. \textit{FGC} \cite{kumar2023featured} is an optimization-based graph coarsening framework that jointly learns the coarsened graph structure and node features while approximately preserving structural similarity to the original graph. \textit{GEC} \cite{meng2024topology} is a topology-preserving graph coarsening method that employs elementary collapse to maintain homotopy equivalence and  topological features.

\textbf{Description of datasets.} Cora and Citeseer  are citation networks from the academic domain, where nodes represent papers and edges indicate citation relationships, with labels corresponding to research topics. DBLP is a co-authorship graph from computer science publications, capturing collaboration patterns among authors. ogbn-arXiv and ogbn-products are part of the Open Graph Benchmark (OGB) \footnote{https://ogb.stanford.edu/}; the former is a citation graph of arXiv papers categorized by subject areas, while the latter is an Amazon product co-purchasing network labeled by product category. The statistics of these datasets are summarized in Table \ref{dataset2}.

\textbf{Efficient Implementation.}
In \textit{ExactCoarsening}, it is unnecessary to check all nodes and edges for dominance beyond the first iteration, because only nodes or edges whose neighborhoods have changed can potentially change to dominated. Specifically, after the initial iteration, GStrongCollapse only examines nodes whose connected edges are removed in the previous edge collapse. Similarly, GEdgeCollapse only checks edges adjacent to nodes whose neighbors were removed during the previous GStrongCollapse.

\begin{table}[t!]
\centering
\caption{ Datasets for node classification and scalability analysis.}

\resizebox{0.5\linewidth}{!}{\begin{tabular}{l|lllll}
\toprule
\bf{Dataset}        & \# \bf Nodes  & \# \bf Edges & \bf \# Features  & \bf \# Classes & \bf  Ave. Deg \\ 
\midrule
Cora & 2,703   & 5,429          & 1,433            & 7  & 3.88\\
Citeseer & 3,312   & 4,732          & 3,703             & 6 & 2.84\\

DBLP      & 17,716   & 52,867        & 1,639        & 4  & 5.97\\
Ogbn-ArXiv         & 169,343   & 1,166,243       & 128        & 40 & 13.77\\ 
Ogbn-Products         & 2,449,029 & 61,859,140   & 100          & 47 & 50.52\\
  LiveJournal & 3,997,962 & 34,681,189 & - & - & 17.35 \\
   Youtube  & 1,134,890 & 2,987,624 & - & -  & 5.27 \\
    cit-Patent & 3,774,768 & 16,518,948 & - & -  & 8.76 \\
    Flixster & 2,523,386 & 9,197,338 & - & - &  7.30 \\
\bottomrule
\end{tabular}
}
\label{dataset2}
\label{tab1}

\end{table}

 \begin{figure}[htbp]
    \centering
    \includegraphics[width=0.5\linewidth]{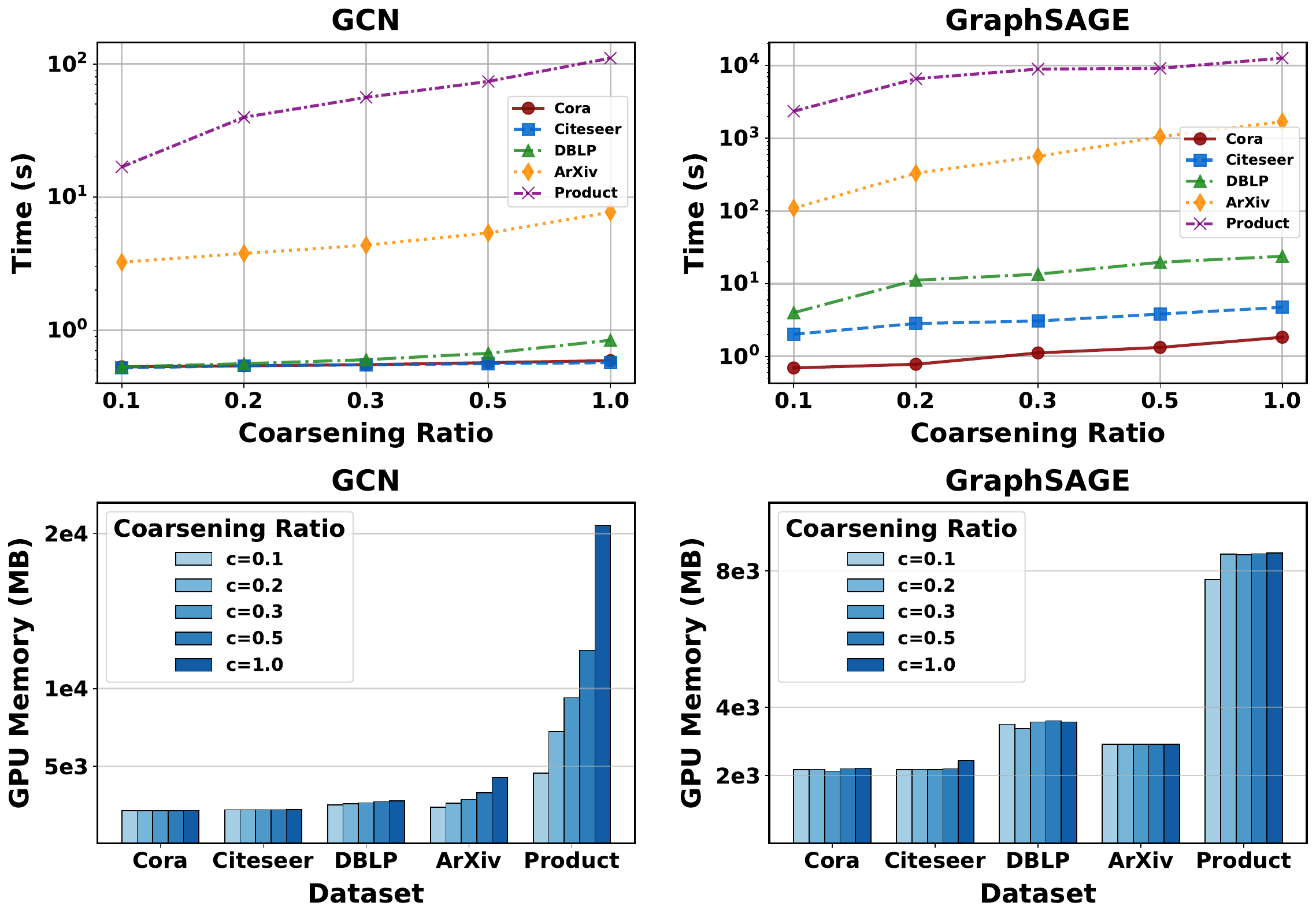} 
 
    \caption{
Training time and GPU memory usage of GNNs.} 
    \label{fig:GCN_time} 

\end{figure}

 \textbf{Additional experimental details.} All GNNs are implemented using the PyTorch Geometric framework, following the standard benchmarking protocol established by Huang et al. \cite{huang2021scaling}. Regarding the STPGC hyperparameters, we set the thresholds $\theta_1$  based on dataset density: 15 for Cora and Citeseer, 25 for DBLP, 50 for ogbn-arxiv, and 100 for ogbn-products. $\theta_2$ is fixed at 1\% of the total node count for all datasets.    The strong collapse process inherently increases graph density. To mitigate this and stabilize GNN training, we applied a  DropEdge  strategy  on the coarsened graphs. We randomly deleted a portion of edges in the coarsened graph. Analogous to edge collapse, we prioritize the removal of heterophilic edges, setting the random drop ratio to 0.1.


  \section{Additional Experimental Results}

\subsection{Node Classification at $c=0.2$}

\begin{table}[t!]
    \newcommand{\err}[1]{{\footnotesize \(\pm#1\)}}
    \centering
    \caption{Accuracy on node classification (c=0.2). The best results are \textbf{bold}, the second best results are \underline{underlined}.}
    \label{tab:coarsening_accuracy_c0.2}
    \resizebox{0.5\linewidth}{!}{%
    \begin{tabular}{llcccc}
        \toprule
        \multirow{2}*{Dataset} & \multirow{2}*{\begin{tabular}{@{}c@{}}Coarsening\\ Method\end{tabular}} & \multicolumn{4}{c}{c=0.2} \\ 
        \cmidrule(r){3-6}
        & & GCN & APPNP & SAGE & SAINT \\ 
        \midrule 
        \multirow{8}*{Cora}
        & Var. Nei. & 78.9\err{0.6} & 80.6\err{0.8} & 75.7\err{1.7} & 74.9\err{0.7} \\
        & Var. Edg. & 73.2\err{1.2} & 73.0\err{1.5} & \underline{80.2}\err{0.5} & 73.1\err{2.2} \\
        & Alg. JC & 79.4\err{0.7} & 81.7\err{0.5} & 80.0\err{0.5} & 76.2\err{1.7} \\
        & Aff. GS & 80.3\err{0.6} & 80.5\err{0.7} & 79.5\err{0.5} & 75.0\err{1.1} \\
        & Kron & 80.2\err{0.8} & 77.1\err{0.7} & 79.5\err{0.6} & 76.5\err{0.4} \\
        & FGC & 77.5\err{1.9} & 76.9\err{2.3} & 79.7\err{1.6} & 77.2\err{1.3} \\
        & GEC & \underline{81.0}\err{0.3} & \underline{81.7}\err{0.9} & 79.7\err{1.5} & \underline{78.9}\err{1.9} \\
        & STPGC & \textbf{82.5}\err{0.4} & \textbf{84.9}\err{0.3} & \textbf{82.7}\err{0.3} & \textbf{81.7}\err{1.3} \\
        \midrule
        \multirow{8}*{Citeseer}
        & Var. Nei. & 70.0\err{0.7} & 70.9\err{0.8} & 67.2\err{0.9} & 67.3\err{1.2} \\
        & Var. Edg. & 54.9\err{0.7} & 60.6\err{0.7} & 69.7\err{1.1} & 70.2\err{1.5} \\
        & Alg. JC & 57.1\err{1.6} & 67.2\err{1.2} & 69.7\err{1.0} & 68.7\err{0.9} \\
        & Aff. GS & 69.6\err{0.7} & 70.7\err{0.7} & 69.8\err{1.1} & 69.1\err{1.2} \\
        & Kron & 70.1\err{1.3} & 70.9\err{0.2} & 69.8\err{1.0} & 69.2\err{0.7} \\
        & FGC & 68.1\err{2.3} & 69.9\err{2.1} & 69.4\err{1.0} & 68.4\err{0.6} \\
        & GEC & \underline{71.3}\err{0.5} & \underline{71.6}\err{0.5} & \underline{69.9}\err{1.1} & \underline{70.0}\err{1.4} \\
        & STPGC & \textbf{71.4}\err{0.6} & \textbf{71.6}\err{0.2} & \textbf{72.1}\err{0.4} & \textbf{72.1}\err{0.9} \\
        \midrule
        \multirow{8}*{DBLP}
        & Var. Nei. & 80.2\err{0.4} & 82.3\err{0.8} & 75.2\err{1.4} & 78.2\err{1.2} \\
        & Var. Edg. & 80.2\err{0.5} & 82.1\err{0.8} & 67.5\err{2.1} & 67.3\err{1.7} \\
        & Alg. JC & 80.0\err{0.4} & 80.0\err{0.5} & 72.2\err{1.6} & 77.5\err{1.7} \\
        & Aff. GS & 80.0\err{0.5} & 82.1\err{0.7} & 74.4\err{1.0} & 75.4\err{1.0} \\
        & Kron & 79.6\err{0.4} & 80.6\err{0.6} & 70.7\err{2.6} & 76.1\err{1.8} \\
        & FGC & 82.6\err{0.4} & 83.5\err{0.1} & 81.7\err{1.2} & 80.0\err{0.6} \\
        & GEC & \underline{83.5}\err{1.0} & \underline{83.8}\err{0.2} & \underline{82.1}\err{0.2} & \underline{82.5}\err{0.6} \\
        & STPGC & \textbf{84.9}\err{0.1} & \textbf{85.3}\err{0.1} & \textbf{82.4}\err{0.5} & \textbf{82.8}\err{0.5} \\
        \midrule
        \multirow{8}*{ArXiv}
        & Var. Nei. & 53.5\err{1.4} & 55.3\err{1.0} & 52.4\err{2.7} & 56.7\err{1.4} \\
        & Var. Edg. & 54.4\err{0.8} & 55.9\err{0.6} & 54.1\err{1.6} & 55.3\err{2.0} \\
        & Alg. JC & 54.0\err{1.5} & 51.8\err{0.8} & 54.7\err{1.6} & 56.6\err{1.9} \\
        & Aff. GS/FGC & \multicolumn{4}{c}{OOM} \\
        & Kron & 56.4\err{2.2} & 55.1\err{0.9} & 55.3\err{0.4} & 54.9\err{0.5} \\
        & GEC & \textbf{67.6}\err{0.5} & \underline{60.1}\err{0.7} & \underline{62.1}\err{0.6} & \underline{62.6}\err{0.2} \\
        & STPGC & \underline{64.1}\err{0.6} & \textbf{60.8}\err{1.0} & \textbf{64.9}\err{0.2} & \textbf{64.4}\err{0.3} \\
        \midrule
        \multirow{3}*{Products}
        & Other methods & \multicolumn{4}{c}{OOM} \\
        & GEC & 70.7\err{0.4} & 61.1\err{0.4} & 69.6\err{0.2} & 69.1\err{0.3} \\
        & STPGC & \textbf{73.2}\err{0.1} & \textbf{62.4}\err{0.4} & \textbf{71.4}\err{0.6} & 70.9\err{0.3} \\
        \bottomrule
    \end{tabular}
    }
\end{table}

  \textbf{Additional results on other coarsening ratios.} The node classification results with a coarsening ratio of $c=0.2$ are presented in Table \ref{tab:coarsening_accuracy_c0.2}. As shown in the table, STPGC achieves strong or competitive performance across most settings, further demonstrating its robustness and effectiveness at different coarsening levels.

  \subsection{Training Efficiency}

\textbf{Efficiency improvement on GNNs.}   Figure \ref{fig:GCN_time} presents the training time and GPU memory usage of  GCN (full-batch) and GraphSAGE (mini-batch) with different coarsening ratios. As shown, both the runtime and memory consumption of the two GNNs are significantly reduced on the coarsened graphs. Mini-batch training is orthogonal to graph coarsening as an approach to scaling up GNNs. Compared with GCN, the memory cost of GraphSAGE does not increase substantially with graph size, but its training time is considerably longer. In contrast, graph coarsening is a more generic approach by directly reducing the graph size and can further accelerate mini-batch GNNs when integrated with them.

\subsection{Link Prediction}

We evaluate STPGC on the link prediction task using GCN with Cora, Citeseer, and DBLP at coarsening ratios $c \in \{0.5, 0.3, 0.1\}$. Table~\ref{tab:link_prediction} reports AUC and AP (format: AUC/AP). STPGC achieves competitive link prediction performance across datasets and coarsening ratios, especially at moderate coarsening levels, and at $c=0.5$ even surpasses the original graph on Cora and Citeseer, demonstrating that topology-preserving coarsening can benefit link prediction.

\begin{table}[h!]
    \centering
    \caption{Link prediction results (AUC/AP) of GCN on coarsened graphs.}
    \label{tab:link_prediction}
    \footnotesize
    \begin{tabular}{lrrrrrr}
        \toprule
        \textbf{Dataset} & \textbf{STPGC} & \textbf{GEC} & \textbf{Var. Nei.} & \textbf{Var. Edg.} & \textbf{Alg. JC} & \textbf{Aff. GS} \\
        \midrule
        \multicolumn{7}{l}{\textit{Cora (Original: 88.41\,/\,88.01)}} \\
        \midrule
        $c=0.5$ & \textbf{89.62}\,/\,\textbf{89.88} & 86.11\,/\,87.20 & 80.37\,/\,81.64 & 81.61\,/\,81.75 & 79.56\,/\,79.97 & 79.31\,/\,79.92 \\
        $c=0.3$ & \textbf{88.36}\,/\,\textbf{86.82} & 86.86\,/\,84.82 & 77.90\,/\,79.93 & 77.56\,/\,77.80 & 79.75\,/\,81.53 & 77.53\,/\,79.27 \\
        $c=0.1$ & 73.76\,/\,74.48 & 68.06\,/\,71.55 & \textbf{76.31}\,/\,\textbf{76.47} & 70.40\,/\,69.29 & 75.50\,/\,76.38 & 74.35\,/\,76.52 \\
        \midrule
        \multicolumn{7}{l}{\textit{Citeseer (Original: 90.39\,/\,91.88)}} \\
        \midrule
        $c=0.5$ & \textbf{90.87}\,/\,90.36 & 88.27\,/\,88.23 & 88.96\,/\,88.94 & 90.42\,/\,\textbf{91.24} & 90.38\,/\,90.55 & 88.41\,/\,89.37 \\
        $c=0.3$ & \textbf{91.43}\,/\,91.00 & 88.86\,/\,89.65 & 88.13\,/\,87.80 & 88.28\,/\,88.89 & 91.21\,/\,90.61 & 90.61\,/\,\textbf{91.08} \\
        $c=0.1$ & 81.31\,/\,81.55 & 80.38\,/\,80.07 & \textbf{82.98}\,/\,\textbf{85.11} & 76.81\,/\,78.42 & 84.54\,/\,86.08 & 82.49\,/\,84.74 \\
        \midrule
        \multicolumn{7}{l}{\textit{DBLP (Original: 92.33\,/\,92.52)}} \\
        \midrule
        $c=0.5$ & \textbf{92.03}\,/\,\textbf{92.31} & 90.00\,/\,89.70 & 85.31\,/\,86.38 & 83.60\,/\,83.01 & 85.41\,/\,85.76 & 91.85\,/\,91.73 \\
        $c=0.3$ & 89.00\,/\,89.03 & 79.19\,/\,81.81 & 89.89\,/\,89.90 & 80.72\,/\,81.40 & 82.51\,/\,82.89 & \textbf{92.08}\,/\,\textbf{92.13} \\
        $c=0.1$ & \textbf{86.80}\,/\,\textbf{85.89} & 77.63\,/\,79.19 & 80.84\,/\,82.25 & 58.01\,/\,59.32 & 67.94\,/\,67.67 & 77.49\,/\,79.54 \\
        \bottomrule
    \end{tabular}
\end{table}

\subsection{Graph Classification}

We further evaluate STPGC on graph classification using GIN \cite{DBLP:conf/iclr/XuHLJ19} on MUTAG \cite{DBLP:conf/icml/KriegeM12} and GCN on Peptides-func \cite{DBLP:conf/nips/DwivediRGPWLB22} at $c=0.5$. As shown in Table~\ref{tab:gc}, STPGC performs competitively among coarsening methods on both datasets, achieving the best result on Peptides-func and the second best on MUTAG, closely approaching the original graph performance.

\begin{table}[h!]
    \centering
    \caption{Graph classification results ($c=0.5$).}
    \label{tab:gc}
    \footnotesize
    \begin{tabular}{lrrrrrrrr}
        \toprule
        \textbf{Dataset} & \textbf{Original} & \textbf{STPGC} & \textbf{GEC} & \textbf{Kron} & \textbf{Alg. JC} & \textbf{Aff. GS} & \textbf{Var. Nei. } & \textbf{Var. Edg. } \\
        \midrule
        MUTAG (Acc.)      & 88.2  & 85.1  & 84.0  & 84.1  & 81.9  & 81.5  & 76.5  & 86.1 \\
        Peptides-func (AP) & 0.183 & 0.184 & 0.173 & 0.178 & 0.175 & 0.176 & 0.178 & 0.177 \\
        \bottomrule
    \end{tabular}
\end{table}

\subsection{Experiments on Heterophilic Graphs}

To evaluate STPGC on heterophilic graphs where neighboring nodes tend to have different labels, we conduct experiments on Texas, Wisconsin, and Cornell using GPRGNN \cite{DBLP:conf/iclr/ChienP0M21} at $c=0.5$. Table~\ref{tab:heterophilic} shows that STPGC performs competitively with GEC and both topology-preserving methods generally outperform spatial and spectral baselines on Wisconsin and Cornell, suggesting that preserving global topological structures benefits heterophilic settings as well.

\begin{table}[h!]
    \centering
    \caption{Node classification accuracy on heterophilic graphs with GPRGNN ($c=0.5$).}
    \label{tab:heterophilic}
    \footnotesize
    \begin{tabular}{lrrr}
        \toprule
        \textbf{Method} & \textbf{Texas} & \textbf{Wisconsin} & \textbf{Cornell} \\
        \midrule
        Original & 56.5 & 52.5 & 44.1 \\
        \midrule
        STPGC & 57.8 & \textbf{54.6} & 45.2 \\
        GEC & 58.1 & 54.1 & \textbf{46.2} \\
        Var. Nei. & 57.6 & 53.5 & 44.3 \\
        Var. Edg. & 58.6 & 52.5 & 41.9 \\
        Alg. JC & 57.0 & 53.3 & 43.2 \\
        Aff. GS & 55.1 & 52.4 & 45.1 \\
        Kron & \textbf{59.5} & 52.0 & 42.7 \\
        \bottomrule
    \end{tabular}
\end{table}

\subsection{Robustness to Edge Perturbation}

We investigate the robustness of STPGC under random edge perturbations on Cora with GCN. Specifically, we randomly add or delete 10\% and 20\% of edges from the original graph before applying STPGC at different coarsening ratios. Table~\ref{tab:perturbation} shows that edge deletion has negligible impact on accuracy, while edge addition causes a slight performance drop. 

\begin{table}[h!]
    \centering
    \caption{STPGC accuracy under edge perturbation on Cora (GCN).}
    \label{tab:perturbation}
    \footnotesize
    \begin{tabular}{lrrrr}
        \toprule
        \textbf{$c$} & \textbf{Add 10\%} & \textbf{Add 20\%} & \textbf{Del 10\%} & \textbf{Del 20\%} \\
        \midrule
        0.5 & 80.1 & 78.0 & 81.3 & 81.7 \\
        0.3 & 79.7 & 78.9 & 82.6 & 82.5 \\
        0.1 & 82.9 & 82.0 & 83.0 & 83.0 \\
        \bottomrule
    \end{tabular}
\end{table}

\section{Toy Example}
\begin{figure}[h!]
    \centering
    \includegraphics[width=0.95\linewidth]{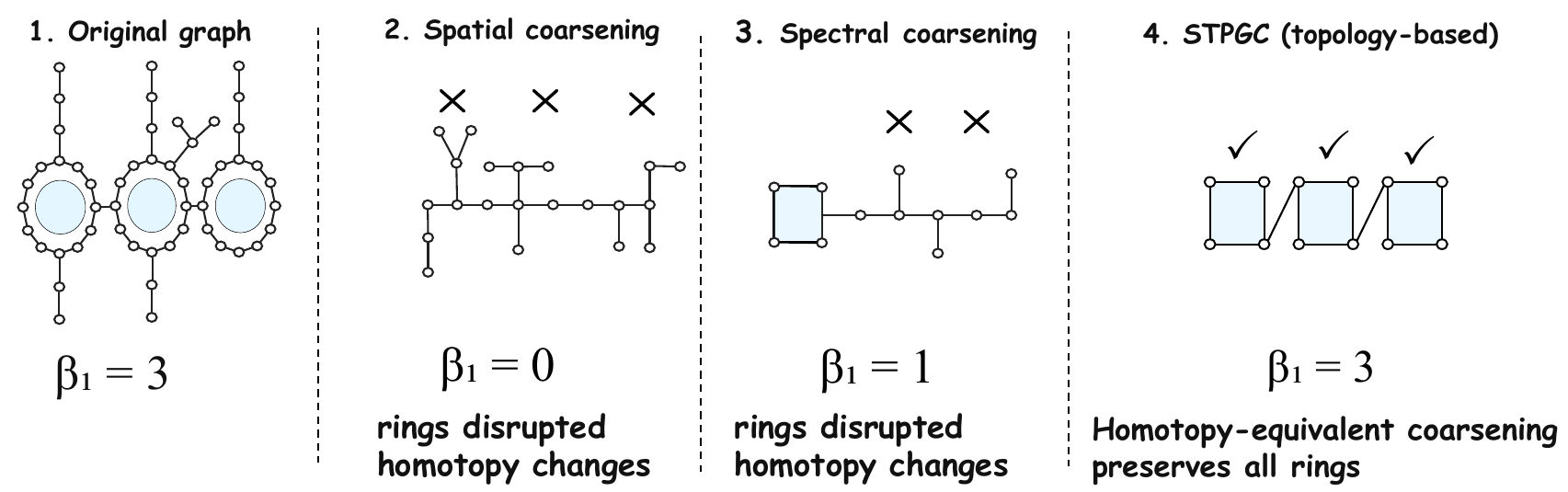}
    \caption{A toy example illustrating the importance of topology preservation in graph coarsening.} 
    \label{fig:toy_example} 
\end{figure}

Figure~\ref{fig:toy_example} provides an intuitive illustration of why topology preservation is crucial in graph coarsening. The original graph contains three ring structures connected by short bridges, together with several contractible branches. Although spatial and spectral coarsening methods may preserve local connectivity patterns or low-frequency spectral similarity, they do not explicitly constrain the homotopy type of the graph. As a result, the three rings can be inadvertently collapsed into paths or tree-like structures, leading to the loss of  1-Betti number ($\beta_1$). In contrast, STPGC removes only topologically reducible structures, such as dominated nodes and dominated edges, while preserving irreducible global invariants. Consequently, the contractible branches are eliminated, but all three rings remain in the coarsened graph. This toy example highlights the key distinction between topology-agnostic coarsening and topology-preserving coarsening: the former may produce a smaller graph with altered global structure, whereas STPGC yields a homotopy-equivalent graph that retains the essential ring information.

\end{document}